\newtheorem{prop}{Proposition}
\newcommand{\RNum}[1]{\uppercase\expandafter{\romannumeral #1\relax}}
\icmltitlerunning{Reviving and Improving Recurrent Back-Propagation}
\begin{document}

\twocolumn[
\icmltitle{Reviving and Improving Recurrent Back-Propagation}



\icmlsetsymbol{equal}{*}

\begin{icmlauthorlist}
\icmlauthor{Renjie Liao}{equal,A,B,C}
\icmlauthor{Yuwen Xiong}{equal,A,B}
\icmlauthor{Ethan Fetaya}{A,C}
\icmlauthor{Lisa Zhang}{A,C}
\icmlauthor{KiJung Yoon}{D,E}
\icmlauthor{Xaq Pitkow}{D,E}
\icmlauthor{Raquel Urtasun}{A,B,C}
\icmlauthor{Richard Zemel}{A,C,F}
\end{icmlauthorlist}

\icmlaffiliation{A}{Department of Computer Science, University of Toronto}
\icmlaffiliation{B}{Uber ATG Toronto}
\icmlaffiliation{C}{Vector Institute}
\icmlaffiliation{D}{Department of Electrical and Computer Engineering, Rice University}
\icmlaffiliation{E}{Department of Neuroscience, Baylor College of Medicine}
\icmlaffiliation{F}{Canadian Institute for Advanced Research}
\icmlcorrespondingauthor{Renjie Liao}{rjliao@cs.toronto.edu}

\icmlkeywords{Recurrent Back-Propagation, Deep learning, RNNs, Optimization}

\vskip 0.3in
]



\printAffiliationsAndNotice{\icmlEqualContribution} 

\begin{abstract}
In this paper, we revisit the recurrent back-propagation (RBP) algorithm~\cite{almeida1987learning,pineda1987generalization}, discuss the conditions under which it applies as well as how to satisfy them in deep neural networks. 
We show that RBP can be unstable and propose two variants based on conjugate gradient on the normal equations (CG-RBP) and Neumann series (Neumann-RBP).
We further investigate the relationship between Neumann-RBP and back propagation through time (BPTT) and its truncated version (TBPTT).
Our Neumann-RBP has the same time complexity as TBPTT but only requires constant memory, whereas TBPTT's memory cost scales linearly with the number of truncation steps.
We examine all RBP variants along with BPTT and TBPTT in three different application domains: associative memory with continuous Hopfield networks, document classification in citation networks using graph neural networks and hyperparameter optimization for fully connected networks.
All experiments demonstrate that RBPs, especially the Neumann-RBP variant, are efficient and effective for optimizing convergent recurrent neural networks.
Code is released at: \url{https://github.com/lrjconan/RBP}.
\end{abstract}

\section{Introduction}\label{sect:intro}

Back-propagation through time (BPTT) ~\cite{werbos1990backpropagation} is nowadays the standard approach for training recurrent neural networks (RNNs).
However, the computation and memory cost of BPTT scale linearly with the number of steps which makes BPTT impractical for applications where long sequences are common ~\cite{sutskever2014sequence,goodfellow2016deep}.
Moreover, as the number of unrolling steps increases, the numerical error accumulates which may render the algorithm useless in some applications, e.g., gradient-based hyperparameter optimization~\cite{maclaurin2015gradient}. 
This issue is often solved in practice by using truncated back-propagation through time (TBPTT)~\cite{williams1990efficient,sutskever2013training} which has constant computation and memory cost, is simple to implement, and effective in some applications. 
However, the quality of the TBPTT approximate gradient is not well understood.
A natural question to ask is, can we get better gradient approximations while still using the same computational cost as TBPTT?

Here will show that under certain conditions on the underlying model, the answer is positive.
In particular, we consider a class of RNNs whose hidden state converges to a steady state.
For this class of RNNs, we can bypass BPTT and compute the exact gradient using an algorithm called recurrent back-propagation (RBP)~\cite{almeida1987learning,pineda1987generalization}.
The key observation exploited by RBP is that the gradient of the steady state w.r.t. the learnable parameters can be directly computed using the implicit function theorem, alleviating the need to unroll the entire forward pass. 
The main computational cost of RBP is in solving a linear system which has constant memory and computation time w.r.t. the number of unrolling steps. However, due to the strong assumptions that RBP imposes, TBPTT has become the standard approach used in practice and RBP did not get much attention for many years.

In this paper, we first revisit RBP in the context of modern deep learning.
We discuss the original algorithm, the assumptions it imposes and how to satisfy them for deep neural networks.
Second, we notice that although the fixed point iteration method used in~\cite{almeida1987learning,pineda1987generalization} is guaranteed to converge if the steady hidden state is achievable, in practice it can fail to do so within a reasonable amount of steps. 
This may be caused by the fact that there are many fixed points and the algorithm is sensitive to initialization.
We try to overcome the instability issue by proposing two variants of RBP based on conjugate gradient on normal equations (CG-RBP) and Neumann series (Neumann-RBP). 
We show a connection between Neumann-RBP and TBPTT which sheds some new light on the approximation quality of TBPTT.
In the experiments, we show several important applications which are naturally amenable to RBP. 
For example, we show how RBP can be used to back propagate thorough the optimization of deep neural networks in order to tune hyperparameters.
Throughout our experiments, we found that Neumann-RBP not only inherits the advantages of original RBP but also remains consistently stable across different applications.

\section{Related Work}\label{sect:related}


In the context of neural networks, RBP was independently discovered by Almeida~\cite{almeida1987learning} and Pineda~\cite{pineda1987generalization} in 1987, which is why this algorithm is sometimes called the Almeida-Pineda algorithm. 
Back then, RBP was shown to be useful in learning content-addressable memory (CAM) models~\cite{hopfield1982neural,hopfield1984neurons} and other computational neurodynamic models~\cite{lapedes1986self,haykinneural,chauvin1995backpropagation}.
These models are special RNNs in a sense that their inference stage is a convergent dynamic system by design. 
For these systems, one can construct a Lyapunov function for the underlying dynamics which further guarantees the asymptotic stability.
We refer readers to chapter $13$ of~\cite{haykinneural} for more details on neurodynamic models. 
The goal of learning in these models is to manipulate the attractors, {\it i.e.} steady states, such that they are close to the input data.
Therefore, during inference stage, even if the input data is corrupted, the corresponding correct attractor or ``memory`` can still be retrieved.
Instead of computing gradient via BPTT, RBP provides an more efficient alternative for manipulating the attractors.

RBP was later applied to learning graph neural networks (GNNs)~\cite{scarselli2009graph}, which are generalizations of RNNs that handle graph-structured input data. 
Specifically, the inference of GNNs is essentially a propagation process which spreads information along the graph.
One can force the propagation process to converge by either constructing a contraction map explicitly, or by regularizing the Jacobian of the update function.
Similarly, the goal is to push the converged inference solution close to the target.
RBP is naturally applicable here and demonstrated to save both computation time and memory.
A recent investigation~\cite{benjamin2017equivalence} shows that RBP is related to equilibrium propagation~\cite{scellier2017equilibrium} which is motivated from the perspective of biological plausibility.
Another recent related work in deep learning is OptNet~\cite{amos2017optnet} where the gradient of the optimized solution of a quadratic programming problem w.r.t. parameters is obtained by analytically differentiating the KKT system.

In the probabilistic graphical models (PGMs) literature, similar techniques to RBP have been developed as well.
For example, an efficient gradient-based method to learn the hyperparameters of log-linear models is provided in~\cite{foo2008efficient} where the core contribution is to use the implicit differentiation trick to compute the gradient of the optimized inference solution w.r.t. the hyperparameters.
A similar implicit differentiation technique is used in~\cite{samuel2009learning} to optimize the maximum a posterior (MAP) solution of continuous MRFs, since the MAP solution can be regarded as the steady state of the inference process.
An implicit-differentiation-based optimization method for generic energy models is proposed in~\cite{domke2012generic} where the gradient of the optimal state (steady state) of the energy w.r.t. the parameters can be efficiently computed given the fast matrix-vector product implementation~\cite{pearlmutter1994fast}.
If one regards the inference algorithms from aforementioned applications as unrolled RNNs, the implicit differentiation technique is essentially equivalent to RBP.

Other efforts have been made to develop alternatives to BPTT.
NoBackTrack~\cite{ollivier2015training} maintains an online estimate of the gradient via the random rank-one reduction technique. 
ARTBP~\cite{tallec2017unbiasing} introduces a probability distribution over the truncation points in the sequence and compensates the truncated gradient based on the distribution. 
Both approaches provide an unbiased estimation of the gradient although their variances differ.

\section{Revisiting Recurrent Back-Propagation}\label{sect:model_bg}

In this section, we review the original RBP algorithm and discuss its assumptions.

\subsection{Recurrent Back-Propagation}

We denote the input data and initial hidden state as $x$ and $h_0$.
During inference, the hidden state at time $t$ is computed as follows,
\begin{align}\label{eq:update}
h_{t+1} = F(x, w_F, h_{t}),
\end{align}
where $F$ is the update function parameterized by $w_F$.
A typical instantiation of $F$ is an LSTM~\cite{hochreiter1997long} cell function.
This RNN formulation differs from the one commonly used in language modeling, as the input is not time-dependent.
We restrict our attention to RNNs with fixed inputs for now as it requires fewer assumptions.
Assuming the dynamical system, (i.e., the forward pass of the RNN), reaches steady state before time step $T$, we have the following equation, 
\begin{align}\label{eq:fixed_point}
h^{\ast} = F(x, w_F, h^{\ast}),
\end{align}
where $h^{\ast}$ is the steady hidden state.
We compute the predicted output $y$ based on the steady hidden state as follows,
\begin{align}\label{eq:output}
y = G(x, w_G, h^{\ast}),
\end{align}
where $G$ is the output function parameterized by $w_G$.
Typically, a loss function $L = l(\bar{y}, y)$ measures the closeness between ground truth $\bar{y}$ and predicted output $y$.
Since the input data $x$ is fixed for all time steps, we can construct a function $\Psi$ of $w_F$ and $h$ as follows,
\begin{align}\label{eq:psi_func}
\Psi(w_F, h) = h - F(x, w_F, h).
\end{align}

At the fixed point, we have $\Psi(w_F, h^{\ast}) = 0$.
Assuming some proper conditions on $F$, e.g., continuous differentiability,
we can take the derivative w.r.t. $w_F$ at $h^{\ast}$ on both sides.
Using the total derivative and the dependence of $h^{\ast}$ on $w_F$ we obtain,
\begin{align}\label{eq:psi_grad}
\frac{\partial \Psi(w_F, h^{\ast}) }{\partial w_F} & = \frac{\partial h^{\ast}}{\partial w_F} - \frac{\mathrm{d} F(x, w_F, h^{\ast}) }{\mathrm{d} w_F} \nonumber \\
& = \left(I - J_{F, h^{\ast}} \right) \frac{\partial h^{\ast}}{\partial w_F} - \frac{\partial F(x, w_F, h^{\ast}) }{\partial w_F} \nonumber \\
& = \mathbf{0}, 
\end{align}
where $J_{F, h^{\ast}} = \frac{\partial F(x, w_F, h^{\ast}) }{\partial h}$ is the Jacobian matrix of $F$ evaluated at $h^{\ast}$ and $\mathrm{d}$ is the total derivative operator.
Assuming that $I - J_{F, h^{\ast}}$ is invertible, we rearrange Eq.~(\ref{eq:psi_grad}) to get,
\begin{align}\label{eq:implicit_grad}
\frac{\partial h^{\ast}}{\partial w_F} = \left(I - J_{F, h^{\ast}} \right)^{-1} \frac{\partial F(x, w_F, h^{\ast})}{\partial w_F}.
\end{align}
In fact, Equations~(\ref{eq:psi_func}-~\ref{eq:implicit_grad}) are an application of the Implicit Function Theorem~\cite{rudin1964principles}, which guarantees the existence and uniqueness of an implicit function $\phi$ such that $h^{\ast} = \phi(w_F)$ if two conditions hold:
\RNum{1}, $\Psi$ is continuously differentiable and
\RNum{2}, $I - J_{F, h^{\ast}}$ is invertible.
Although we do not know the analytic expression of the function $\phi$, we can still compute its gradient at the fixed point.
Based on Eq.~(\ref{eq:implicit_grad}), we now turn our attention towards
computing the gradient of the loss w.r.t. the parameters of the RNN.
By using the total derivative and the chain rule, we have
\begin{align}
\frac{\partial L}{\partial w_G} & = \frac{\partial L}{\partial y} \frac{\partial G(x, w_G, h^{\ast})}{\partial w_G} \label{eq:rbp_grad_1} \\
\frac{\partial L}{\partial w_F} & = \frac{\partial L}{\partial y} \frac{\partial y}{\partial h^{\ast}} \left(I - J_{F, h^{\ast}} \right)^{-1} \frac{\partial F(x, w_F, h^{\ast})}{\partial w_F}.  \label{eq:rbp_grad_2}
\end{align}
Since the gradient of the loss w.r.t. $w_G$ can be easily obtained by back-propagation, we focus our exposition on the computation of $\frac{\partial L}{\partial w_F}$.
The original RBP algorithm~\cite{pineda1987generalization,almeida1987learning} introduces an auxiliary variable $z$ such that,
\begin{align}\label{eq:aux_var}
z = \left(I - J_{F, h^{\ast}}^{\top} \right)^{-1} \left( \frac{\partial L}{\partial y} \frac{\partial y}{\partial h^{\ast}} \right)^{\top},
\end{align}
where $z$ is a column vector.
If we managed to compute $z$, then we can substitute it into Eq. (\ref{eq:rbp_grad_2}) to get the gradient.
Note that the Jacobian matrix $J_{F, h^{\ast}}$ is nonsymmetric for general RNNs which renders direct solvers of linear system impractical.
To compute $z$, the original RBP algorithm uses fixed point iteration.
In particular, we multiply $\left(I - J_{F, h^{\ast}}^{\top} \right)$ on the left hand of both sides of Eq. (\ref{eq:aux_var}) and rearrange the terms as follows,
\begin{align}\label{eq:linear_sys_rbp}
z = J_{F, h^{\ast}}^{\top} z  + \left( \frac{\partial L}{\partial y} \frac{\partial y}{\partial h^{\ast}} \right)^{\top}.
\end{align}
If we view the right hand side of the above equation as a function of $z$, then
applying the fixed point iteration results in the Algorithm~\ref{alg:rbp}.
Note that the most expensive operation in this algorithm is the matrix-vector product $J_{F, h^{\ast}}^{\top}z$, which is the same operator as back-propagation.

\begin{algorithm}[t]
\caption{: Original RBP}
\label{alg:rbp}
\begin{algorithmic}[1]
\STATE \textbf{Initialization:} initial guess $z_{0}$, {\it e.g.}, draw uniformly from $[0,1]$, $i = 0$, threshold $\epsilon$
\REPEAT
\STATE $i = i + 1$
\STATE $z_{i} = J_{F, h^{\ast}}^{\top} z_{i-1}  + \left( \frac{\partial L}{\partial y} \frac{\partial y}{\partial h^{\ast}} \right)^{\top}$
\UNTIL{$\Vert z_{i} - z_{i-1} \Vert < \epsilon$}
\STATE $\frac{\partial L}{\partial w_F} = z_{i}^{\top} \frac{\partial F(x, w_F, h^{\ast})}{\partial w_F}$
\STATE Return $\frac{\partial L}{\partial w_F}$
\end{algorithmic}
\end{algorithm}

\subsection{Assumptions of RBP}

In this section, we discuss how to satisfy the assumptions of RBP.
Recall that in order to apply the implicit function theorem, $\Psi(w_F, h)$ has to satisfy two assumptions: 
\RNum{1}, $\Psi$ is continuously differentiable. 
\RNum{2}, $I - J_{F, h^{\ast}}$ is invertible.
Condition \RNum{1} requires the derivative of $F$ to be continuous, a condition satisfied by many RNNs, like LSTM and GRU~\cite{cho2014learning}.
Condition \RNum{2} is equivalent to requiring the determinant of $I - J_{F, h^{\ast}}$ to be nonzero,
i.e., $\text{det}(I - J_{F, h^{\ast}}) \neq 0$.
One sufficient but not necessary condition to ensure this is to force $F$ to be a contraction map, as in~\citet{scarselli2009graph}.
Recall that $F$ is a contraction map on Banach space $B$, i.e., a complete normed vector space, iff, $\forall h_1, h_2 \in B$, $\Vert F(h_1) - F(h_2) \Vert \leq \mu \Vert h_1 - h_2 \Vert$ where $0 \leq \mu < 1$.
Banach fixed point theorem guarantees the uniqueness of the fix point of the contraction map $F$ in $B$.
Note that here we drop the dependency of $F$ on $w$ for readability.
Based on the first order Taylor approximation, $F(h) = F(h^{\ast}) + J_{F, h^{\ast}}(h - h^{\ast})$, we have,
\begin{align}
\frac{\Vert F(h) - F(h^{\ast}) \Vert}{\Vert h - h^{\ast} \Vert} = \frac{\Vert J_{F, h^{\ast}}(h - h^{\ast}) \Vert}{\Vert h - h^{\ast} \Vert}.
\end{align}
Note that if we use $L_2$ vector norm, then the induced matrix norm, a.k.a., operator norm, is,
{\footnotesize
\begin{align}
\Vert J_{F, h^{\ast}} \Vert & = \text{sup} \left\{ \frac{\Vert J_{F, h^{\ast}} h \Vert}{\Vert h \Vert} : \forall h \neq 0 \right\} = \sigma_{\text{max}}(J_{F, h^{\ast}}),
\end{align}}
where $\sigma_{\text{max}}$ is the largest singular value.
Therefore, relying on the contraction map definition, we have,
\begin{align}\label{eq:contraction_jacobian}
\Vert J_{F, h^{\ast}} \Vert \leq \mu < 1,
\end{align}
Moreover, since the minimum singular value of $I - J_{F, h^{\ast}}$ is $1 - \sigma_{\text{max}}(J_{F, h^{\ast}})$, we have  
\begin{align}\label{eq:contraction_map_correctness}
\vert \text{det}(I - J_{F, h^{\ast}}) \vert & = \prod_{i} \vert \sigma_{i}(I - J_{F, h^{\ast}}) \vert \nonumber \\
& \geq \left[ 1 - \sigma_{\text{max}}(J_{F, h^{\ast}}) \right]^{d} > 0.
\end{align}
Thus our second condition holds following Eq.~(\ref{eq:contraction_map_correctness}).

\citet{scarselli2009graph} use $L_1$ vector norm which results in a looser inequality since $\Vert J_{F, h^{\ast}} \Vert_2 \leq \sqrt{d} \Vert J_{F, h^{\ast}} \Vert_1 $.
They obtain an easier to compute regularization term $\max_{i}( \Vert J_{F, h^{\ast}}(:, i) \Vert_1 - \eta)^{2}$ where $(:, i)$ denotes the $i$-th column and $\eta \in (0, 1)$ is the desired contraction constant.
We note, however, that this work makes a claim that the contraction map assumption can be achieved by regularizing the local Jacobian $J_{F, h^{\ast}}$ of a general neural network.
This is problematic because the contraction map property is a global property of $F$ that requires
regularizing every $h$ in the space $B$, not just $h^{\ast}$. %
Nevertheless, this regularization evaluated at $h^{\ast}$ encourages local contraction at the fixed point $h^{\ast}$, which is sufficient for satisfying condition \RNum{2}. 
Another way to enforce condition \RNum{2} to hold is directly formalizing the Lagrangian of equality constraint $\Psi(w_F, h^{\ast}) = 0$.
Since all applications we considered in this paper have converged dynamic systems in practice, we leave further discussions 
of condition II to the appendix.

\section{New Recurrent Back-Propagation Variants}\label{sect:model}

In this section, we present our newly proposed variants of RBP, CG-RBP and Neumann-RBP, in detail.

\subsection{Recurrent Back-Propagation based on Conjugate Gradient}

Facing the system of linear equations like Eq.~(\ref{eq:aux_var}) in the derivation of original RBP, one would naturally think of the most common iterative solver, i.e., conjugate gradient method~\cite{hestenes1952methods}.
In particular, multiplying $I - J_{F, h^{\ast}}^{\top}$ on both sides, we obtain the following equations,
\begin{align}\label{eq:linear_sys}
\left(I - J_{F, h^{\ast}}^{\top} \right) z = \left(\frac{\partial L}{\partial y} \frac{\partial y}{\partial h^{\ast}}\right)^{\top}.
\end{align}
Unfortunately, for general RNNs, the Jacobian matrix $J_{F, h^{\ast}}$ of the update function, e.g., a cell function of LSTM, is non-symmetric in general.
This increases the difficulty of solving the system.
One simple yet sometimes effective way to approach this problem is to exploit the conjugate gradient method on the normal equations (CGNE)~\cite{golub2012matrix}.
Specifically, we multiply $I - J_{F, h^{\ast}}$ on both sides of Eq.~(\ref{eq:linear_sys}) which results in,
\begin{align}\label{eq:normal_linear_sys}
\left(I - J_{F, h^{\ast}} \right) \left(I - J_{F, h^{\ast}}^{\top} \right) z = \left(I - J_{F, h^{\ast}} \right) \left(\frac{\partial L}{\partial y} \frac{\partial y}{\partial h^{\ast}}\right)^{\top}. \nonumber
\end{align}
Having a symmetric matrix multiplying $z$ on the left hand side, we can now use the conjugate gradient method.
The detailed algorithm is easily obtained by instantiating the standard conjugate gradient (CG) template.
The most expensive operation used in CGNE is $J_{F, h^{\ast}}J_{F, h^{\ast}}^{\top}z$, which
can be implemented by successive matrix-vector products similarly for computing the Fisher
information product of the natural gradient method~\cite{schraudolph2002fast}.
Once we solve $z$ via $K$-step CGNE, we obtain the final gradient by substituting the solution into Eq.~(\ref{eq:rbp_grad_2}).
Since the condition number of the current system is the square of the original one,
the system may be slower to converge in practice.
Exploring more advanced and faster convergent numerical methods under this setting, like LSQR~\cite{paige1982lsqr}, is left for future work.

\subsection{Recurrent Back-Propagation based on Neumann Series}

We now develop a new RBP variant called Neumann-RBP,
which uses Neumann series from functional analysis and is efficient in terms of computation and memory. 
We then show its connections to BPTT and TBPTT.

A Neumann series is a mathematical series of the form $\sum_{t=0}^{\infty} A^{t}$ where $A$ is an operator.
In matrix theory, it is also known as the geometric series of a matrix. 
A convergent Neumann series has the following property,
\begin{align}
(I - A)^{-1} = \sum_{k=0}^{\infty} A^{k}.
\end{align}
One sufficient condition of convergence is that the spectral radius (i.e., the largest absolute eigenvalue value) of $A$ is less than $1$.
This convergence criterion applied to $A = J_{F, h^{\ast}}$ implies condition \RNum{2}.
Other cases where the convergence hold is beyond the scope of this paper.
If the Neumann series $\sum_{t=0}^{\infty} J_{F, h^{\ast}}^{t}$ converges, we can use it to replace
the term $\left(I - J_{F, h^{\ast}} \right)^{-1}$ in E.q.~(\ref{eq:rbp_grad_2}).
Furthermore,
the gradient of RBP can be approximated with the $K$-th order truncation of Neumann series as below,
\begin{align}
\frac{\partial L}{\partial w_F} \approx \frac{\partial L}{\partial y} \sum_{k=0}^{K} \frac{\partial y}{\partial h^{\ast}} J_{F, h^{\ast}}^{k} \frac{\partial F(x, w_F, h^{\ast})}{\partial w_F}.
\end{align}
There is a rich body of literature on how to compute Neumann series efficiently 
using binary or ternary decomposition~\cite{westreich1989eval,dimitrov2017on}. 
However, these decomposition based approaches are inapplicable in our context since we cannot compute the Jacobian matrix $J_{F, h^{\ast}}$
efficiently for general neural networks.
Fortunately, we can instead efficiently compute the matrix-vector product $J_{F, h^{\ast}}^{\top}u$ and $J_{F, h^{\ast}}u$ ($u$ is a proper sized vector) by using reverse and forward mode auto-differentiation~\cite{pearlmutter1994fast}.
Relying on this technique, we summarize the Neumann series based RBP algorithm in Algorithm~\ref{alg:neumann_rbp}.
In practice, we can obtain further memory efficiency by performing updates within the for loops in-place (please refer to the example code in appendix),
so that memory usage need not scale with the number of truncation steps.
Moreover, since the algorithm does not rely on hidden states except the steady state $h^{\ast}$, we no longer need to store the hidden states in the forward pass of the RNN. 
Besides the computational benefit, we now have the following propositions to connect Neumann-RBP to BPTT and TBPTT.
\begin{prop}\label{prop:neumann_rbp_1}
Assume that we have a convergent RNN which satisfies the implicit function theorem conditions.
If the Neumann series $\sum_{t=0}^{\infty} J_{F, h^{\ast}}^{t}$ converges, then the full Neumann-RBP is equivalent to BPTT. 
\end{prop}
\begin{prop}\label{prop:neumann_rbp_2}
For the above RNN, let us denote its convergent sequence of hidden states as $h^{0}, h^{1}, \dots, h^{T}$ where $h^{\ast} = h^{T}$ is the steady state.
If we further assume that there exists some step $K$ where $0 < K \le T$ such that $h^{\ast} = h^{T} = h^{T-1} = \dots = h^{T-K}$, then $K$-step Neumann-RBP is equivalent to $K$-step TBPTT.
\end{prop}

Moreover, the following proposition bounds the error of $K$-step Neumann-RBP.
\begin{prop}
If the Neumann series $\sum_{t=0}^{\infty} J_{F, h^{\ast}}^{t}$ converges, then the error between $K$-step and full Neumann series is as follows,
{\footnotesize
\begin{align}
\left\Vert \sum_{t=0}^{K} J_{F, h^{\ast}}^{t} - (I - J_{F, h^{\ast}})^{-1} \right\Vert \le \left\Vert (I - J_{F, h^{\ast}})^{-1} \right\Vert \left\Vert J_{F, h^{\ast}} \right\Vert^{K+1} \nonumber
\end{align}}
\end{prop}
We leave all proofs in appendix.







\begin{algorithm}[t]
\caption{: Neumann-RBP}
\label{alg:neumann_rbp}
\begin{algorithmic}[1]
\STATE \textbf{Initialization:} $v_{0} = g_{0} = \left( {\frac{\partial L}{\partial y}} {\frac{\partial y}{\partial h^{\ast}}} \right)^{\top}$
\FOR{step $t = 1, 2, \dots, K$}
\STATE $v_{t} = J^{\top} v_{t-1}$
\STATE $g_{t} = g_{t-1} + v_{t}$
\ENDFOR
\STATE $\frac{\partial L}{\partial w_F} = \left( g_{K} \right)^{\top} \frac{\partial F(x, w_F, h^{\ast})}{\partial w_F}$
\STATE Return $\frac{\partial L}{\partial w_F}$
\end{algorithmic}
\end{algorithm}

\section{Experiments}\label{sect:exp}

In this section, we thoroughly study all RBP variants on diverse applications.
Our implementation based on PyTorch is publicly available\footnote{\url{https://github.com/lrjconan/RBP}}.
Note that our Neumann-RBP is very simple to implement using automatic differentiation and we provide a very short example program in the appendix.

\subsection{Associative Memory}


\begin{figure}[t]
\renewcommand*{\arraystretch}{0.1}
\centering
\begin{tabular}{c@{\hspace{0.1mm}}c}
    \includegraphics[width=0.49\linewidth]{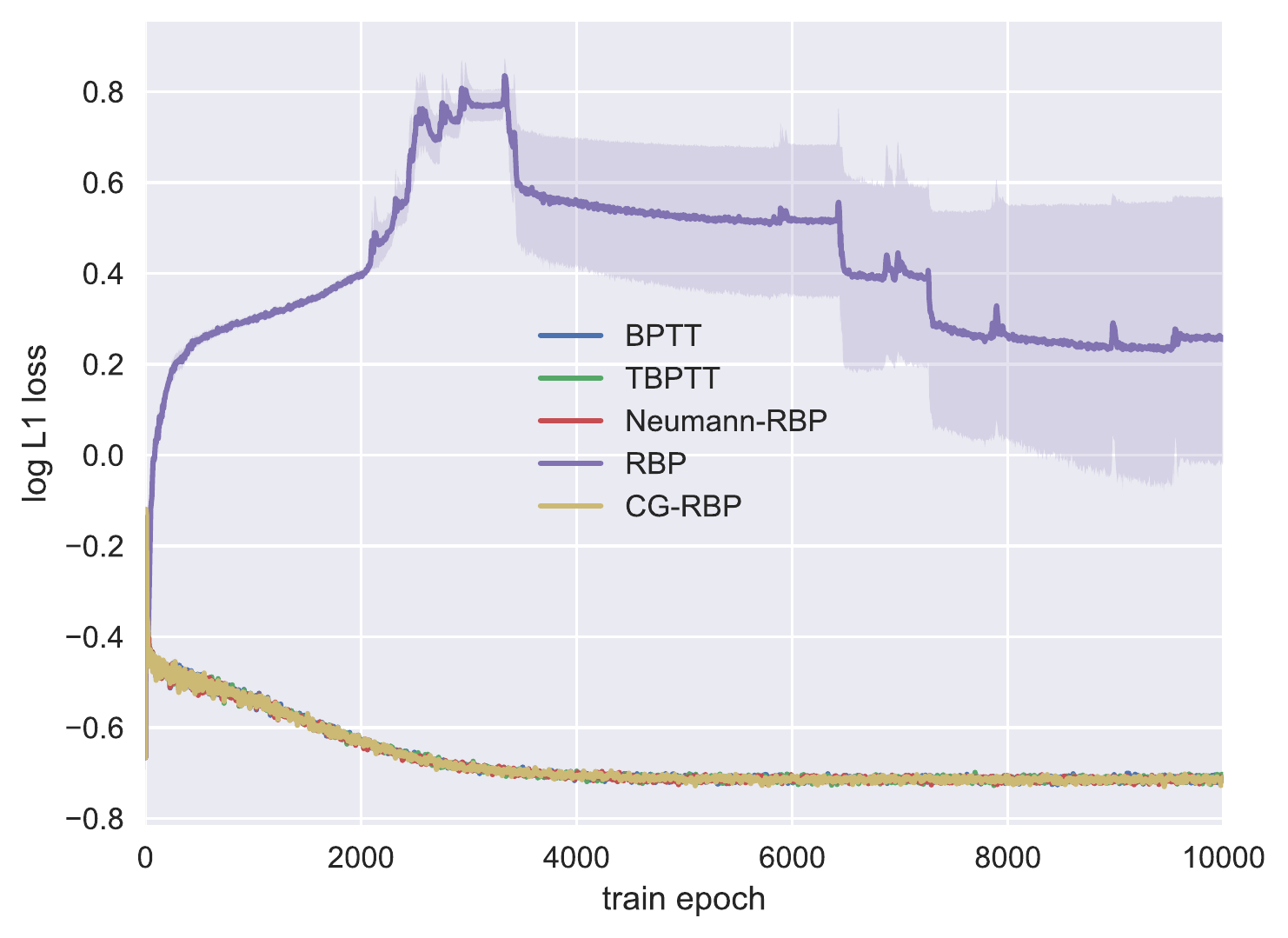}&
    \includegraphics[width=0.49\linewidth]{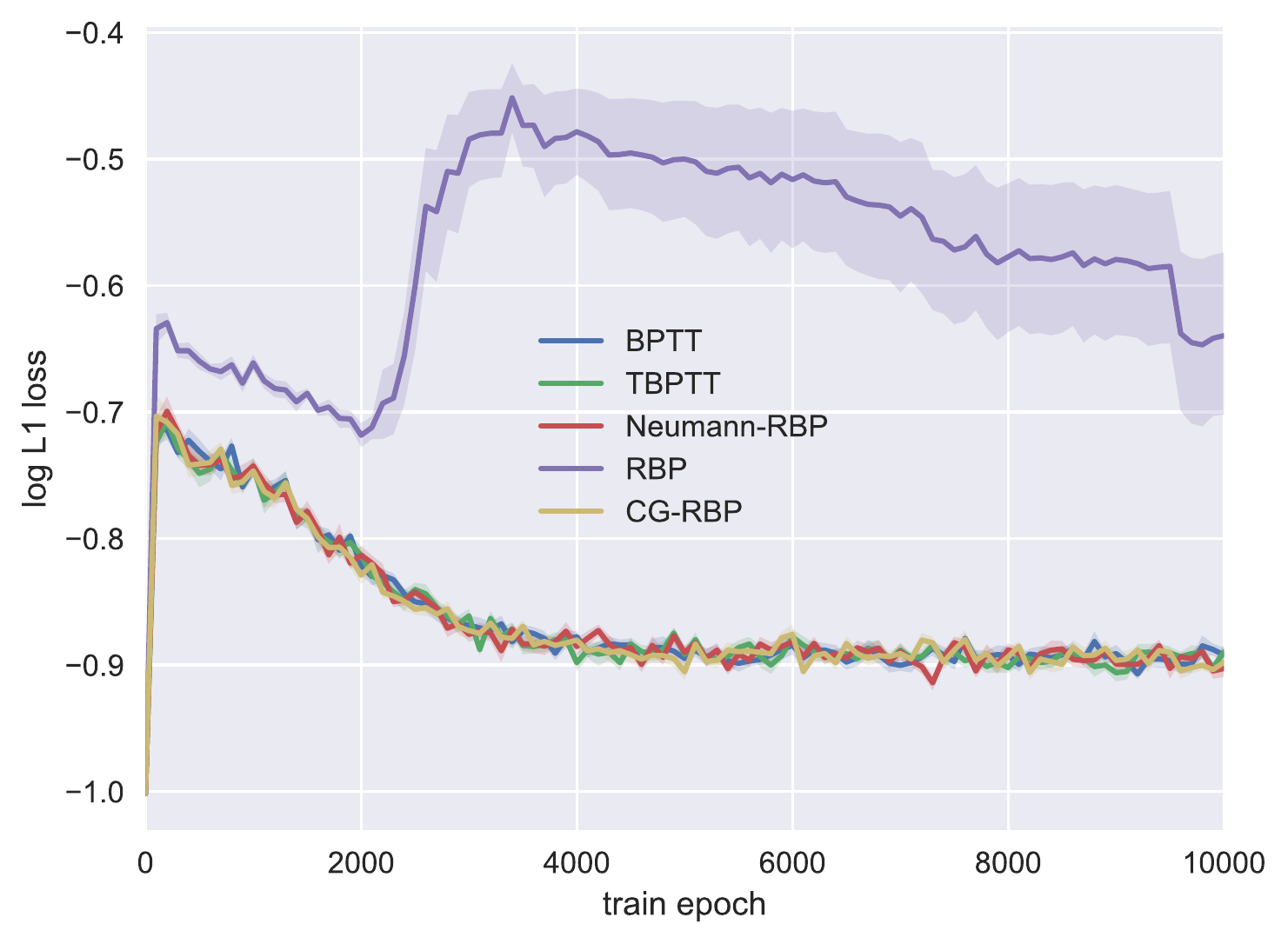} \\
\end{tabular}
\vspace{-0.3cm}
\caption{Left and right figures are training and validation curves of the same Hopfield network. $y$ axis is the log scale L1 loss. $x$ axis of (a) and (b) are training and validation step respectively. We do validation every $10$ training steps.} 
\label{fig:hopfield_curve}
\end{figure}

\begin{figure}
\centering
\renewcommand*{\arraystretch}{0.1}
\begin{tabular}{@{\hspace{0.1mm}}c@{\hspace{0.1mm}}c@{\hspace{0.1mm}}c@{\hspace{0.1mm}}c@{\hspace{0.1mm}}c@{\hspace{0.1mm}}c}
    \includegraphics[width=0.16\linewidth]{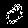}&
    \includegraphics[width=0.16\linewidth]{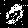}&
    \includegraphics[width=0.16\linewidth]{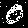}&
    \includegraphics[width=0.16\linewidth]{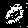}&
    \includegraphics[width=0.16\linewidth]{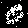} &
    \includegraphics[width=0.16\linewidth]{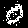}\\
    \includegraphics[width=0.16\linewidth]{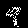}&
    \includegraphics[width=0.16\linewidth]{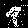}&
    \includegraphics[width=0.16\linewidth]{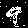}&
    \includegraphics[width=0.16\linewidth]{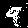}&
    \includegraphics[width=0.16\linewidth]{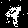} &
    \includegraphics[width=0.16\linewidth]{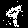}\\ [1mm]
    (a) & (b) & (c) & (d) & (e) & (f) \\
\end{tabular}
\vspace{-0.3cm}
\caption{Visualization of associative memory. (a) Corrupted input image; (b)-(f) are retrieved images by BPTT, TBPTT, RBP, CG-RBP, Neumann-RBP respectively.} 
\vspace{-0.5cm}
\label{fig:hopfield_vis}
\end{figure}


\begin{figure*}[t]
\renewcommand*{\arraystretch}{0.1}
\centering
\begin{tabular}{@{\hspace{0.1mm}}c@{\hspace{0.1mm}}c@{\hspace{0.1mm}}c}
    \includegraphics[width=0.32\linewidth]{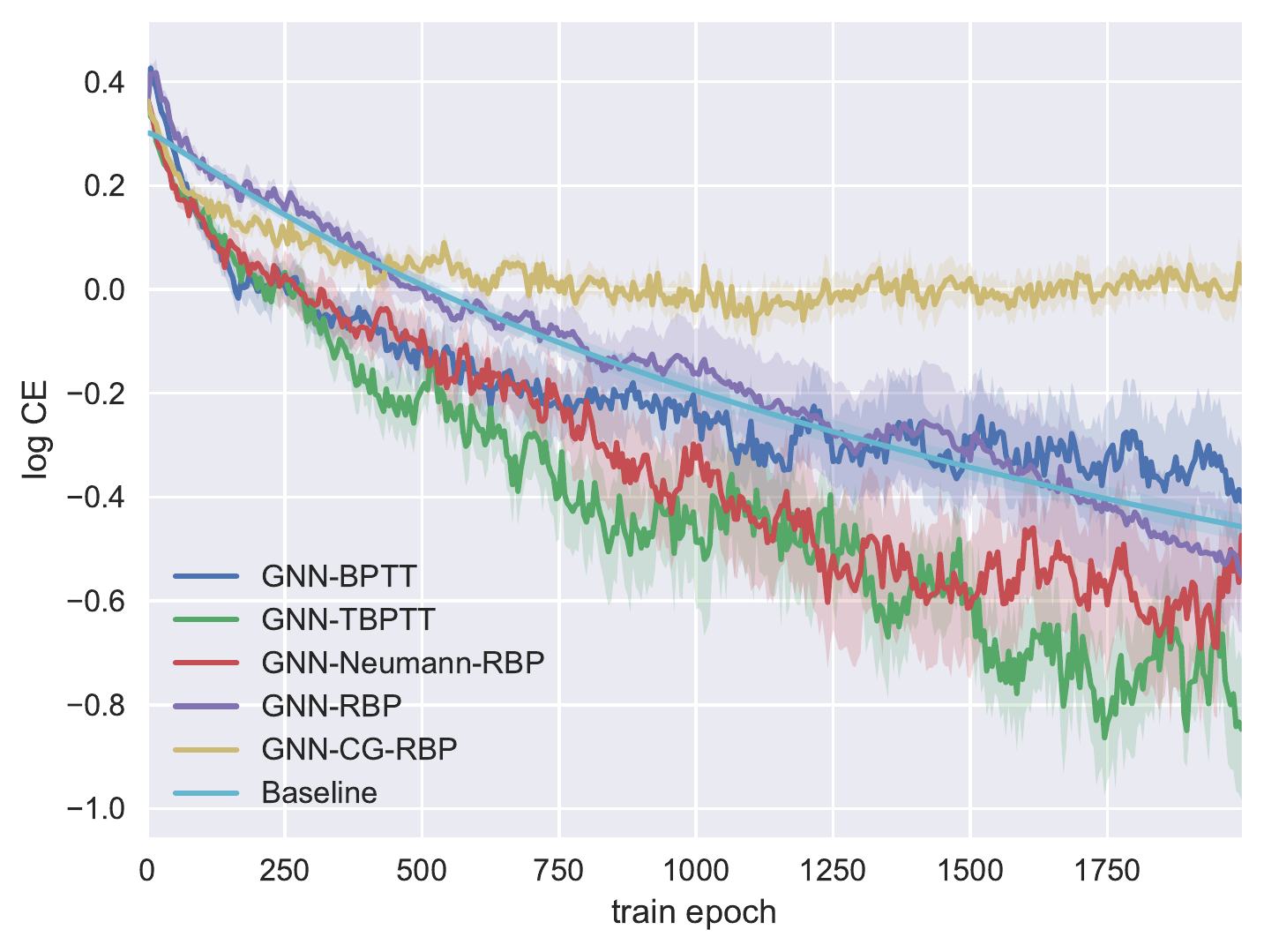}&
    \includegraphics[width=0.32\linewidth]{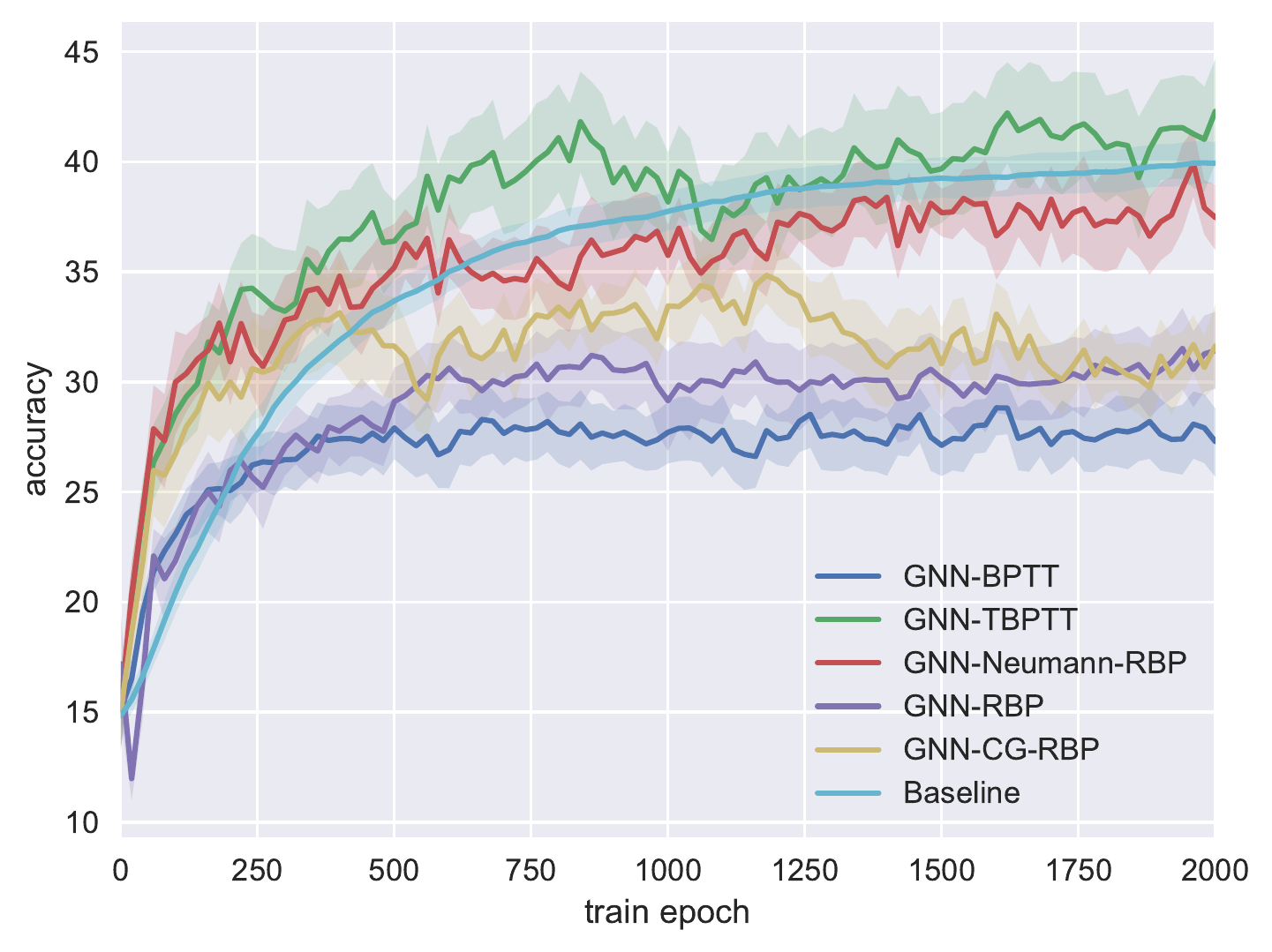}&
    \includegraphics[width=0.31\linewidth]{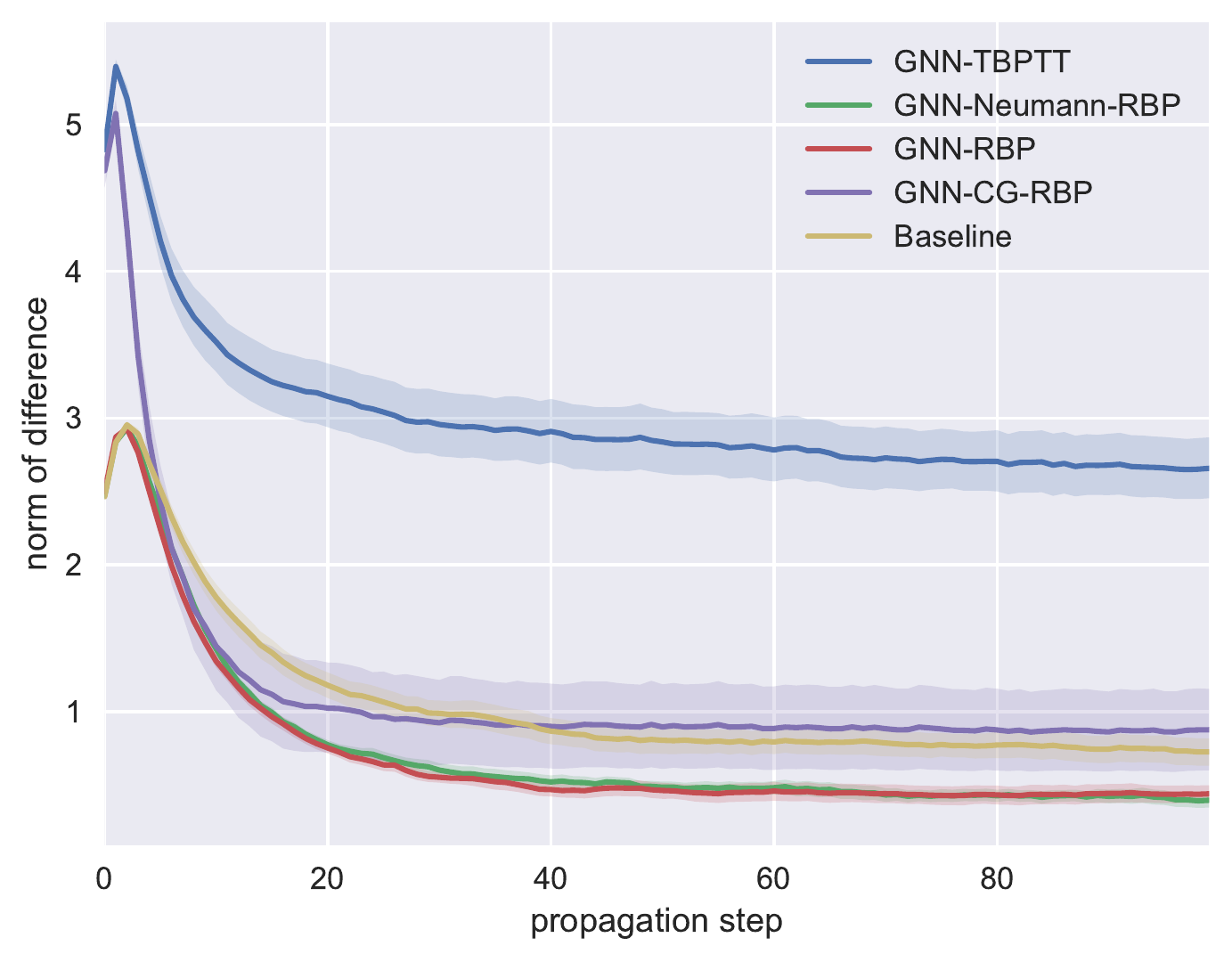}\\
    (a) & (b) & (c) \\
    \includegraphics[width=0.32\linewidth]{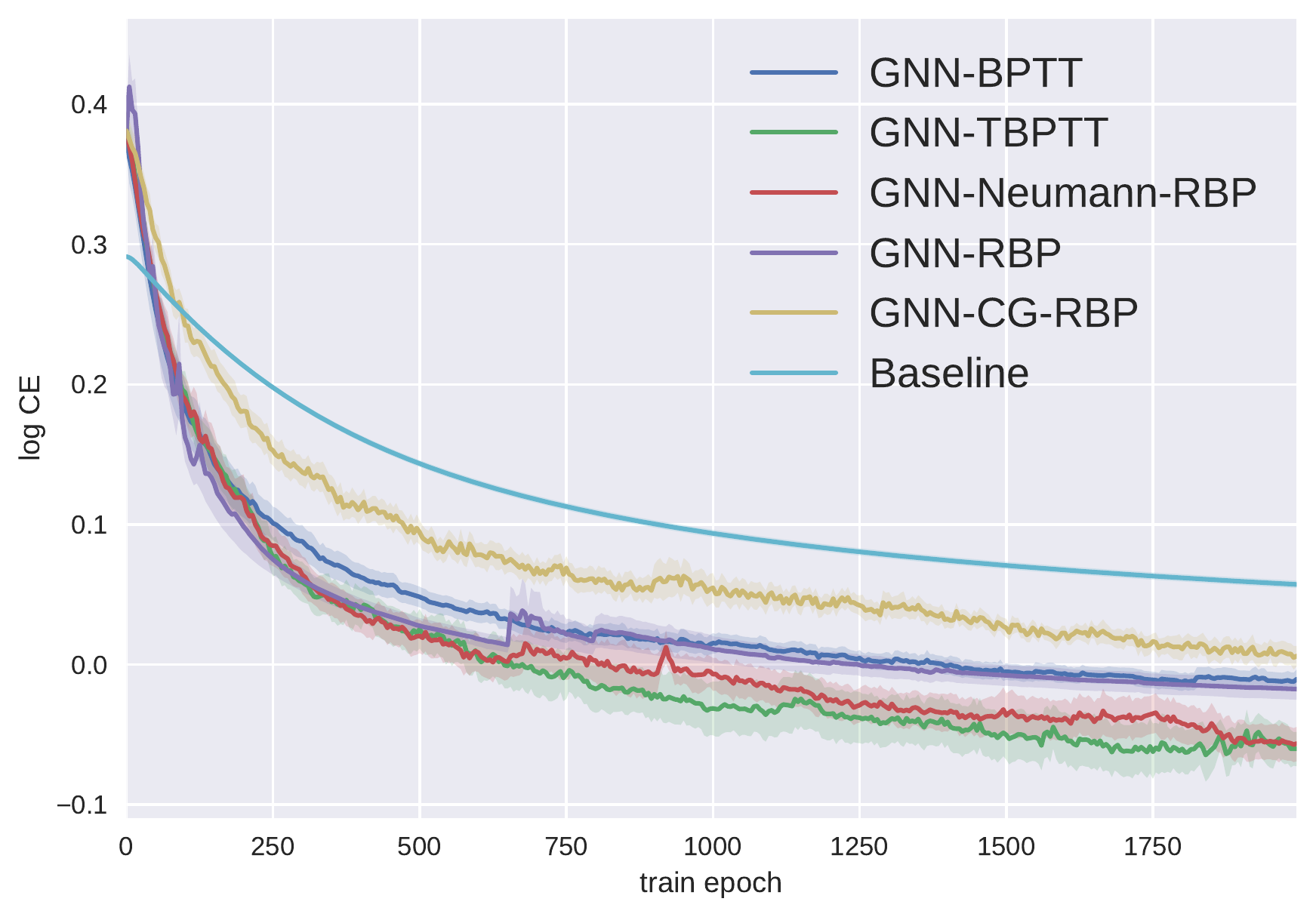}&
    \includegraphics[width=0.315\linewidth]{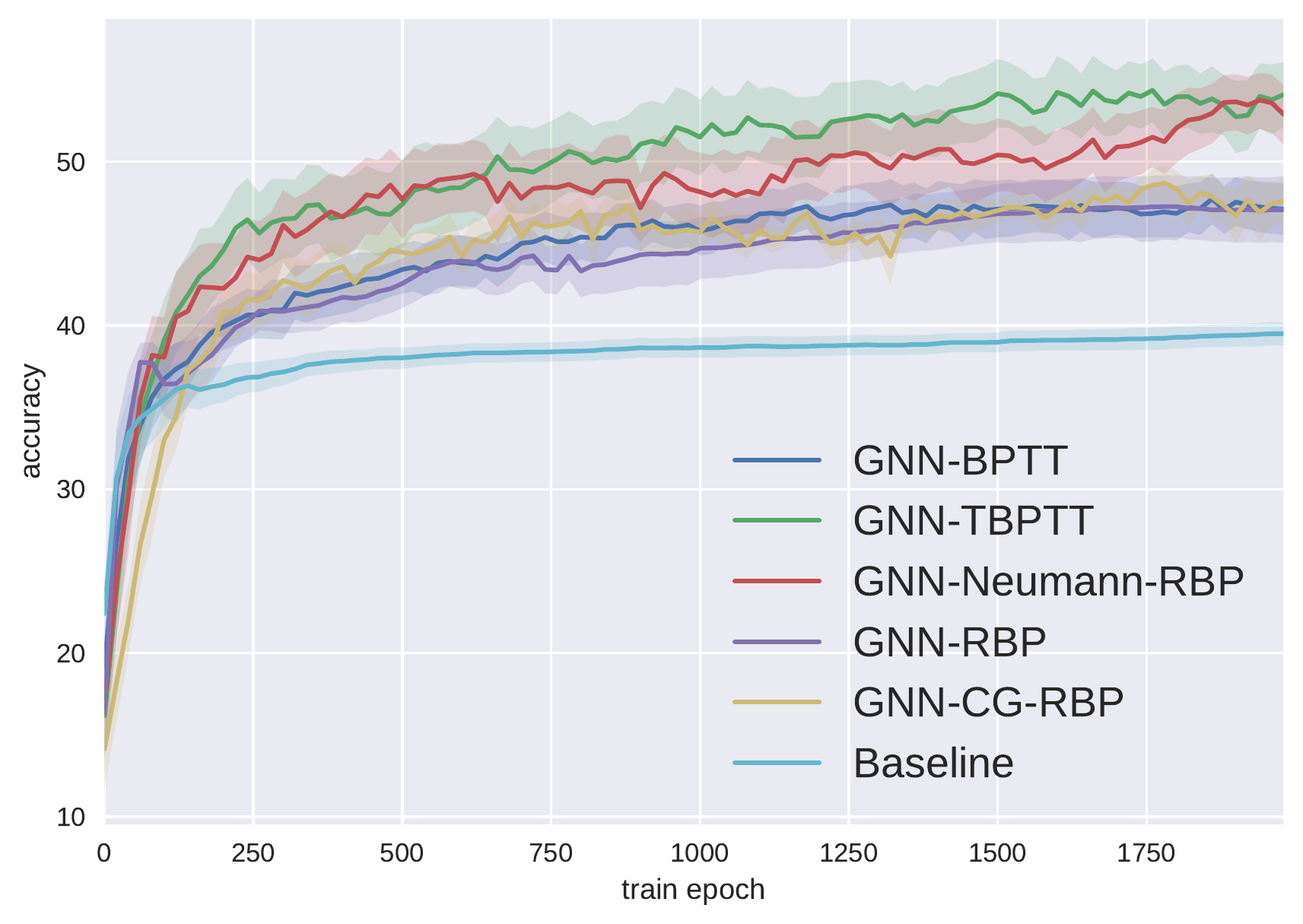}&
    \includegraphics[width=0.32\linewidth]{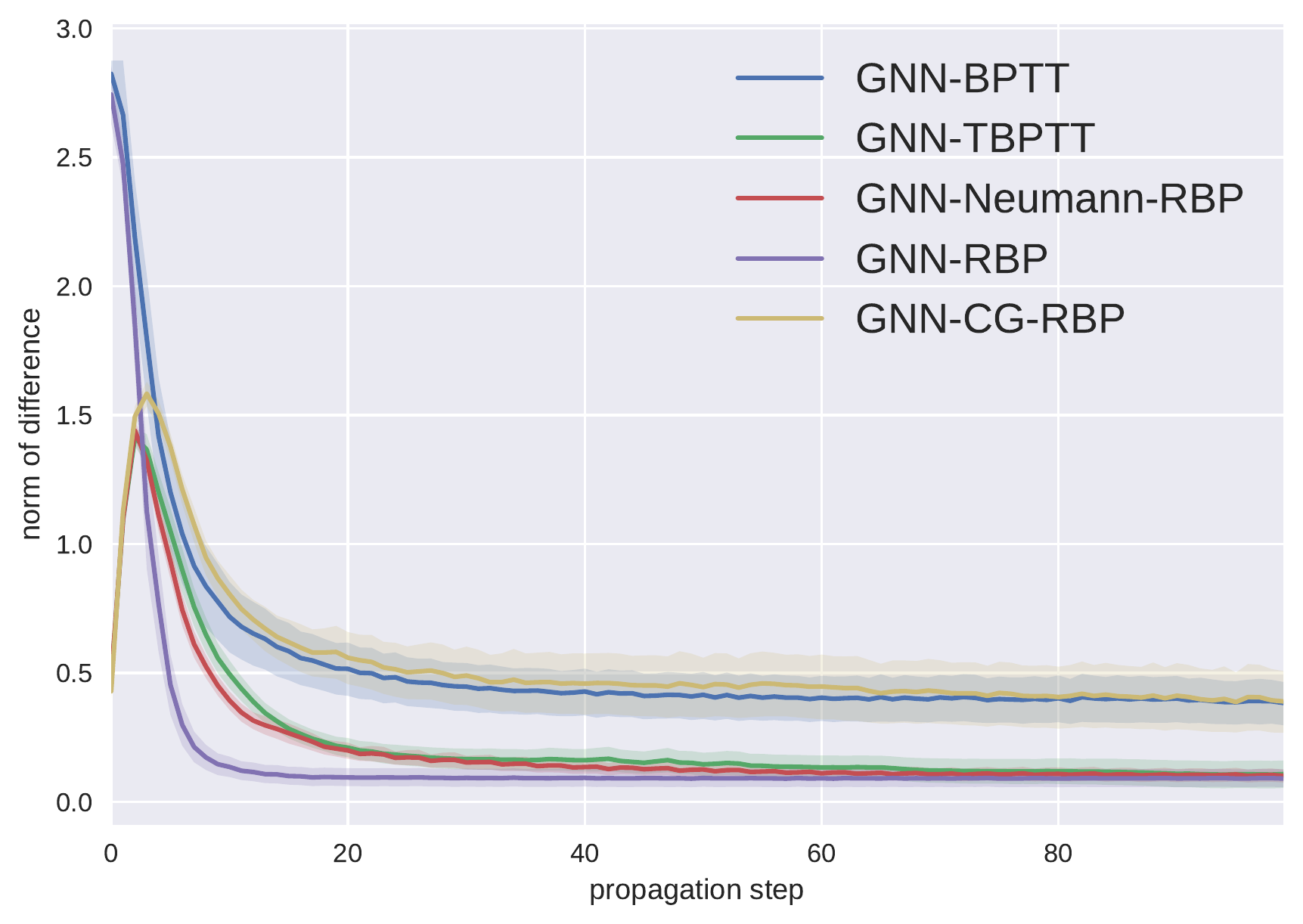}\\    
    (d) & (e) & (f) \\
\end{tabular}
\vspace{-0.3cm}
\caption{The first and second rows are the results on Cora and Pubmed respectively.
(a) to (c), (d) to (f) are curves of training loss, validation accuracy and difference norm of the same GNN with different optimization methods.} 
\vspace{-0.5cm}
\label{fig:citation_curve}
\end{figure*}

The classical testbed for RBP is the associative memory~\cite{hopfield1982neural}. 
Several images or patterns are presented to the neural network which learns to store or memorize the images.
After the learning process, the network is subsequently presented with a corrupted or noisy version of the original image. 
The task is then to retrieve the corresponding original image.
We consider a simplified continuous Hopfield network as described in~\cite{haykinneural}.
Specifically, the system of nonlinear first-order differential equations is,
\begin{align}\label{eq:hopfield}
\frac{d}{dt} h_{i}(t) = - \frac{b \cdot h_{i}(t)}{a} + \sum_{j=1}^{N} w_{ij} \phi(b \cdot h_{j}(t)) + I_{i},
\end{align}
where subscript $i$ denotes the index of the neuron. 
$w_{ij}$ is the learnable weight between a pair of neurons.
$h_{i}$ is the hidden state of the $i$-th neuron.
$\phi$ is a nonlinear activate function which is a sigmoid function in our experiments.
$a, b$ are positive constants and are set to $1$ and $0.5$.
The set of neurons consists of three parts: observed, hidden and output neurons, of size $784$, $1024$ and $784$ respectively.
For observed neuron, the state $h_{i}$ is clamped to observed pixel value $I_{i}$.
For hidden and output neurons, the observed pixel value $I_{i} = 0$ and their states $h_{i}$ are updated according to Eq.~(\ref{eq:hopfield}).
During inference, the output neurons return $x_{i} = \phi(b \cdot h_{i})$ and we further binarize it for visualization.
An important property of continuous Hopfield networks is that by updating the states according to Eq.~(\ref{eq:hopfield}) until convergence (which corresponds to the forward pass of RNNs), we are guaranteed to minimize the following (Lyapunov) energy function.
\begin{align}\label{eq:lyapunov}
E = \sum_{i=1}^{N} \left( \frac{1}{a} \int_{0}^{x_{i}} \phi^{-1}(x) dx - I_{i} x_{i} \right) - \sum_{i=1}^{N} \sum_{j=1}^{N} \frac{w_{ij} x_{i} x_{j}}{2}, \nonumber
\end{align}
where we drop the dependency on time $t$ for simplicity.
Instead of adopting the Hebbian learning rule as in~\cite{hopfield1982neural}, we directly formulate the learning objective as minimizing $\sum_{i \in \mathcal{I}} \Vert x_{i} - I_{i} \Vert_{1}$ where $\mathcal{I}$ is the set of observed neurons.
In our experiments, we train and test on $10$ MNIST images. 
In training we feed clean data, and during testing we randomly corrupt $50\%$ of the non-zero pixel values to zero. 
The number of updates for one inference pass is $50$.

Fig.~\ref{fig:hopfield_curve} shows the training and validation curves of continuous Hopfield network with different optimization methods. 
Here truncation steps for TBPTT, RBP, CG-RBP and Neumann-RBP are all set to $20$.
From the figure, we can see that CG-RBP and Neumann-RBP match BPTT under this setting which verifies that their gradients are accurate.
Nevertheless, we can see that training curve of the original RBP blows up which validates its instability issue.
The hidden state of Hopfield network becomes steady within $10$ steps.
However, we notice that if we set the truncation step to $10$, original RBP exhibits behaviors which fails to converge.
We also show some visualizations of retrieved images of the Hopfield network under different optimization methods in Fig.~\ref{fig:hopfield_vis}.
More visual results are provided in the appendix.

\subsection{Semi-supervised Document Classification}

We investigate RBPs on semi-supervised document classification with citation networks.
A node of a network represents a document associated with a bag-of-words feature.
Nodes are connected based on the citation links.
Given a portion of nodes labeled with subject categories, e.g., science, history, the task is to predict the categories for unlabeled nodes within the same network.
We use two citation networks from~\cite{yang2016revisiting}, i.e., Cora, Pubmed, of which the statistics are summarized in the appendix.
We adopt graph neural networks (GNNs)~\cite{scarselli2009graph} model and employ the GRU as the update function similarly as~\cite{li2015gated}.
We refer to~\cite{li2015gated,liao2017graph} for more details.
We compare different optimization methods with the same GNN.
We also add a logistic regression model as a baseline which is applied to every node independently.
The labeled documents are randomly split into $1\%$, $49\%$ and $50\%$ for training, validation and testing.
We run all experiments with $10$ different random seeds and report the average results.
The training, validation and difference norm curves of BPTT, TBPTT and all RBPs are shown in Fig.~\ref{fig:citation_curve}.
We can see that the hidden states of GNNs with different optimization methods become steady during inference from Fig.~\ref{fig:citation_curve} (c).
As shown in Fig.~\ref{fig:citation_curve} (a) and (b), Neumann-RBP is on par with TBPTT on both datasets. 
This matches our analysis in proposition~\ref{prop:neumann_rbp_1} since the changes of successive hidden states of TBPTT and Neumann-RBP are almost zero as shown in Fig.~\ref{fig:citation_curve} (c).
Moreover, they outperform other variants and the baseline model. 
On the other hand, BPTT on both datasets encounter issues in learning which may be attributable to the accumulation of errors in the many steps of unrolling.
Note that CG-RBP sometimes performs significantly worse than Neumann-RBP, e.g., on Cora. 
This may be caused by the fact that the underlying linear system of CG-RBP is ill-conditioned in some applications as the condition number is squared in CGNE.
The test accuracy of different methods are summarized in Table~\ref{table:citation}.
It generally matches the behavior in the validation curves.


\begin{table}
\centering
\begin{tabular}{@{}c|cc@{}}
\hline
\toprule
Test Acc. & Cora & Pubmed \\
\midrule
\midrule
Baseline & 39.96 $\pm$ 3.4 & 40.41 $\pm$ 3.1 \\
BPTT & 24.48 $\pm$ 6.6 & 47.05 $\pm$ 3.1 \\
TBPTT & 46.55 $\pm$ 6.4 & 53.41 $\pm$ 6.7 \\
RBP & 29.25 $\pm$ 3.3 & 48.55 $\pm$ 3.4 \\
CG-RBP & 39.26 $\pm$ 6.5 & 49.12 $\pm$ 2.9 \\
Neumann-RBP & \textbf{46.63} $\pm$ 8.3 & \textbf{53.56} $\pm$ 5.3 \\
\bottomrule
\end{tabular}
\vspace{-0.3cm}
\caption{Test accuracy of different methods on citation networks.} 
\vspace{-0.1cm}
\label{table:citation}
\end{table}

\subsection{Hyperparameter Optimization}\label{sect:exp_hyper_grad}

\begin{table}
\centering
\begin{tabular}{@{}c|ccc@{}}
\hline
\toprule
Truncate Step & 10 & 50 & 100 \\
\midrule
\midrule
Run Time & $\times$3.02 & $\times$2.87 & $\times$2.68 \\
Memory & $\times$4.35 & $\times$4.25 & $\times$4.11 \\
\bottomrule
\end{tabular}
\vspace{-0.3cm}
\caption{Run time and memory comparison. We show the ratio of BPTT's cost divided by Neumann-RBP's.}
\vspace{-0.4cm}
\label{table:rbp_runtime}
\end{table}

\begin{figure}[t]
\centering
\begin{tabular}{c@{\hspace{0.1mm}}c}
    \includegraphics[width=0.49\linewidth]{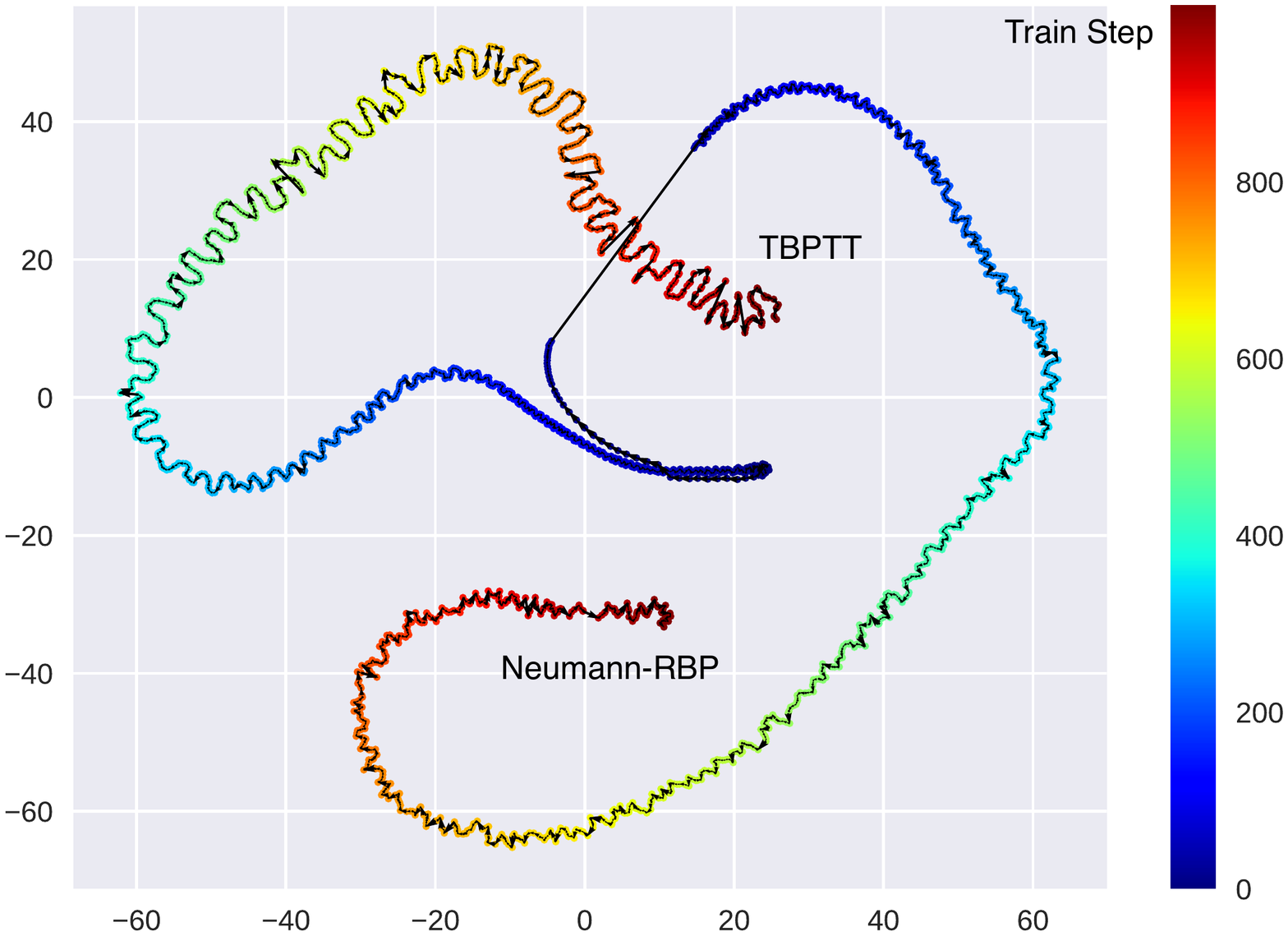}&
    \includegraphics[width=0.49\linewidth]{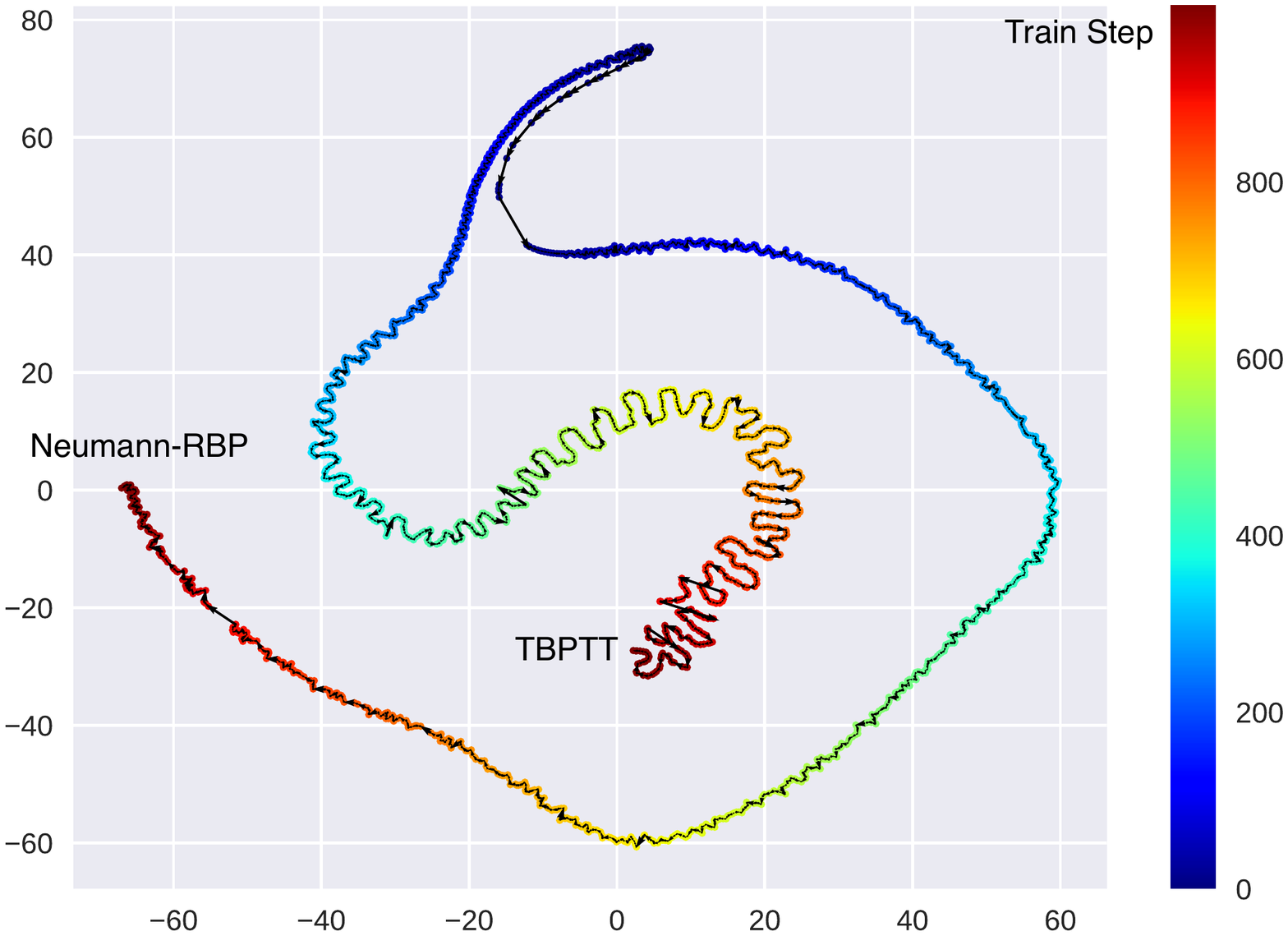}\\
    (a) & (b) \\
\end{tabular}
\vspace{-0.3cm}
\caption{t-SNE visualization of trajectories of hidden states for TBPTT and Neumann-RBP on hyperparameter optimization. Different methods are annotated at the convergence point. (a) and (b) are snapshots of hidden states at meta step $20$ and $40$. Time is encoded as the color map for better illustration.}
\vspace{-0.5cm}
\label{fig:vis_trajectory}
\end{figure}

\begin{figure*}[t]
\renewcommand*{\arraystretch}{0.1}
\centering
\begin{tabular}{@{\hspace{0.1mm}}c@{\hspace{0.1mm}}c@{\hspace{0.1mm}}c@{\hspace{0.1mm}}c}
    \includegraphics[width=0.245\linewidth]{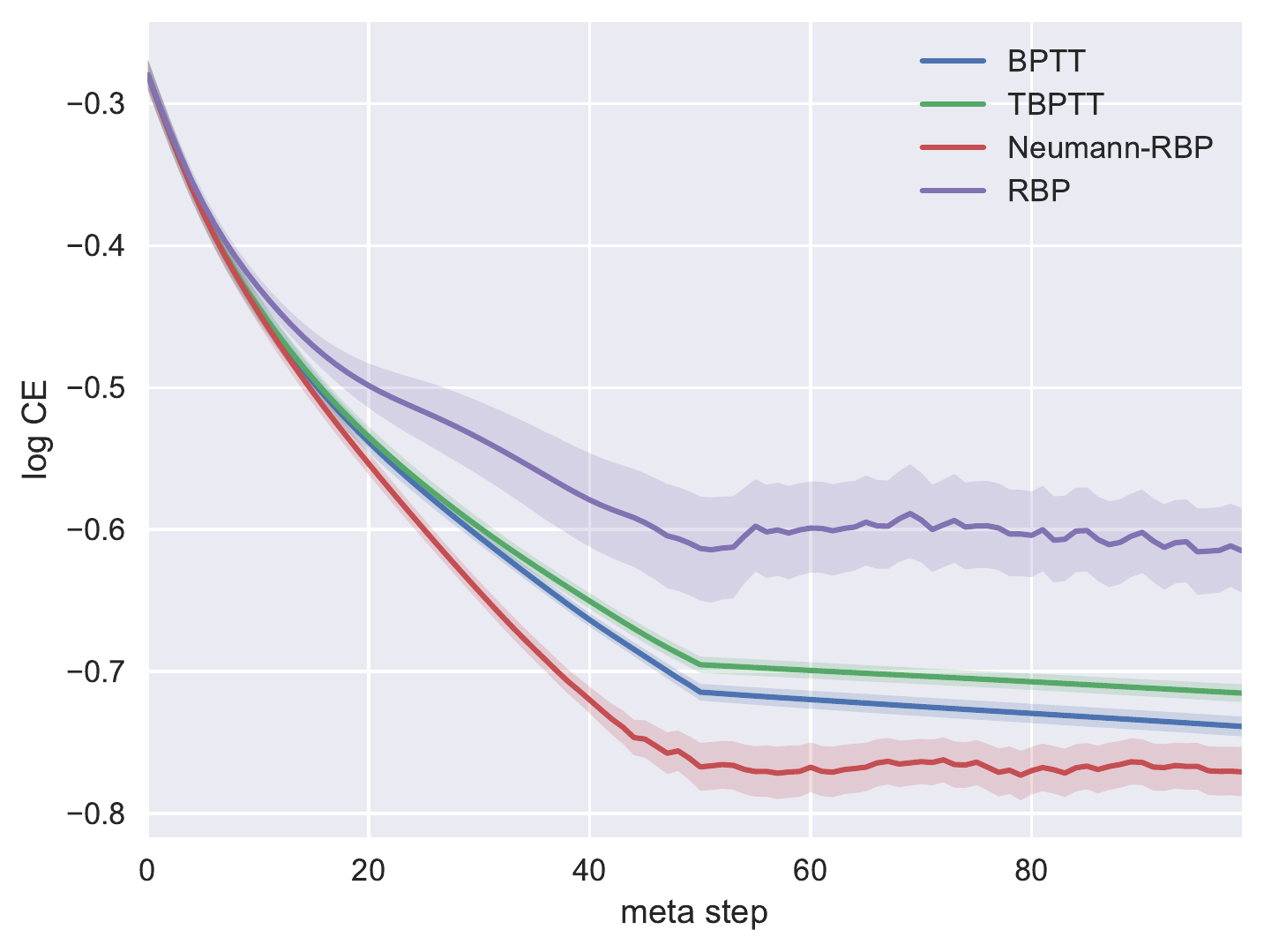}&
    \includegraphics[width=0.245\linewidth]{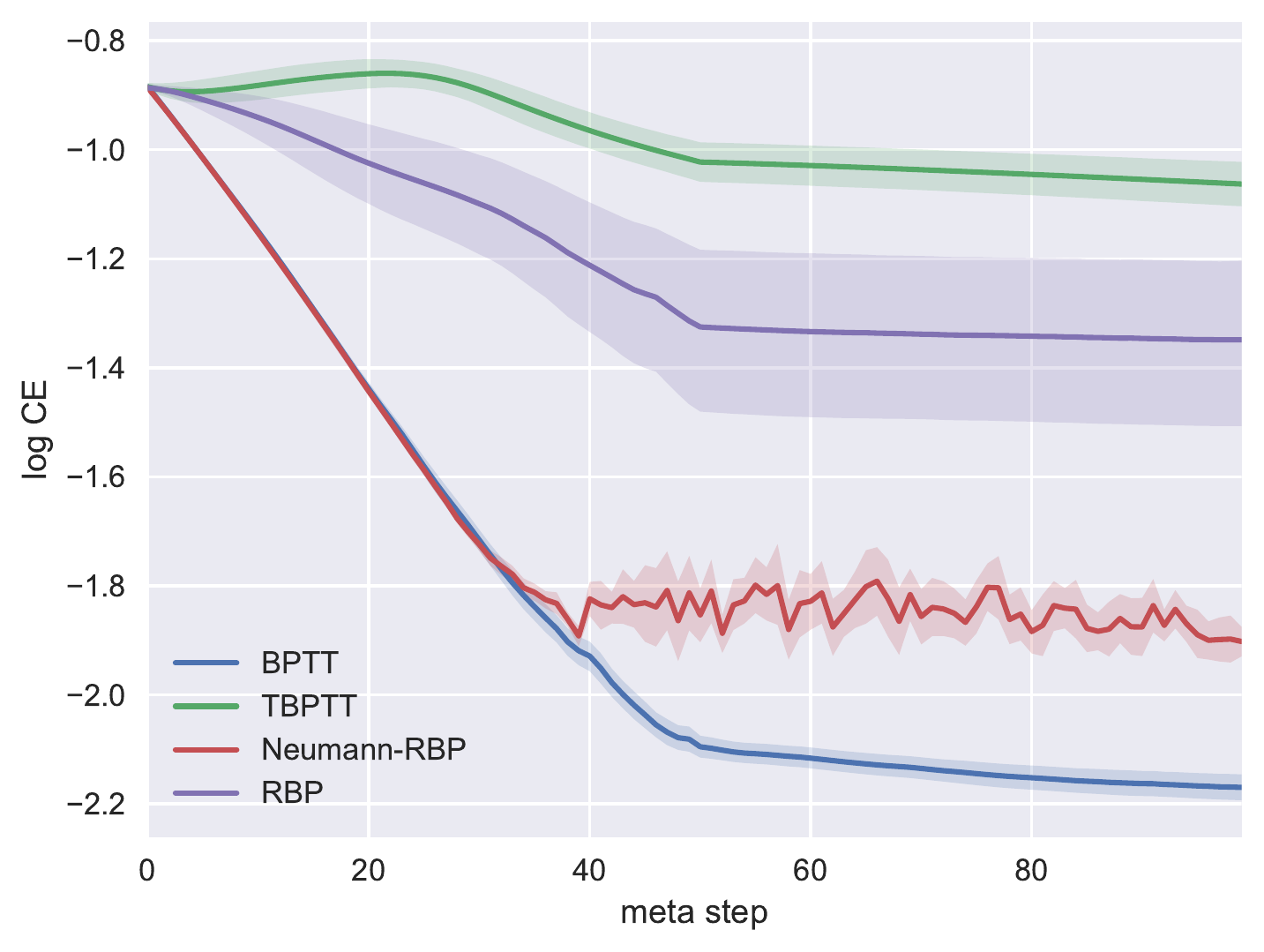}&
    \includegraphics[width=0.245\linewidth]{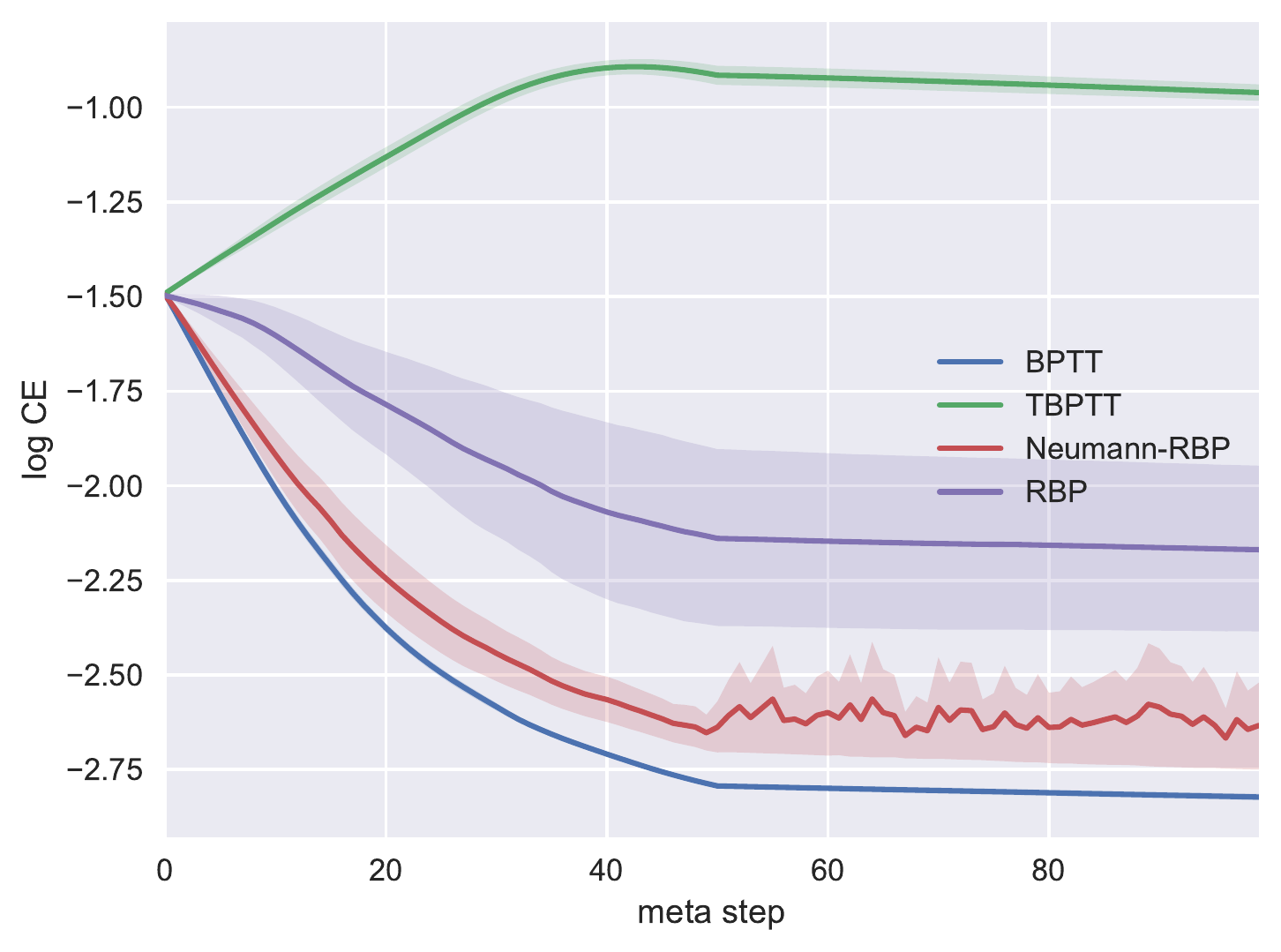}&
    \includegraphics[width=0.245\linewidth]{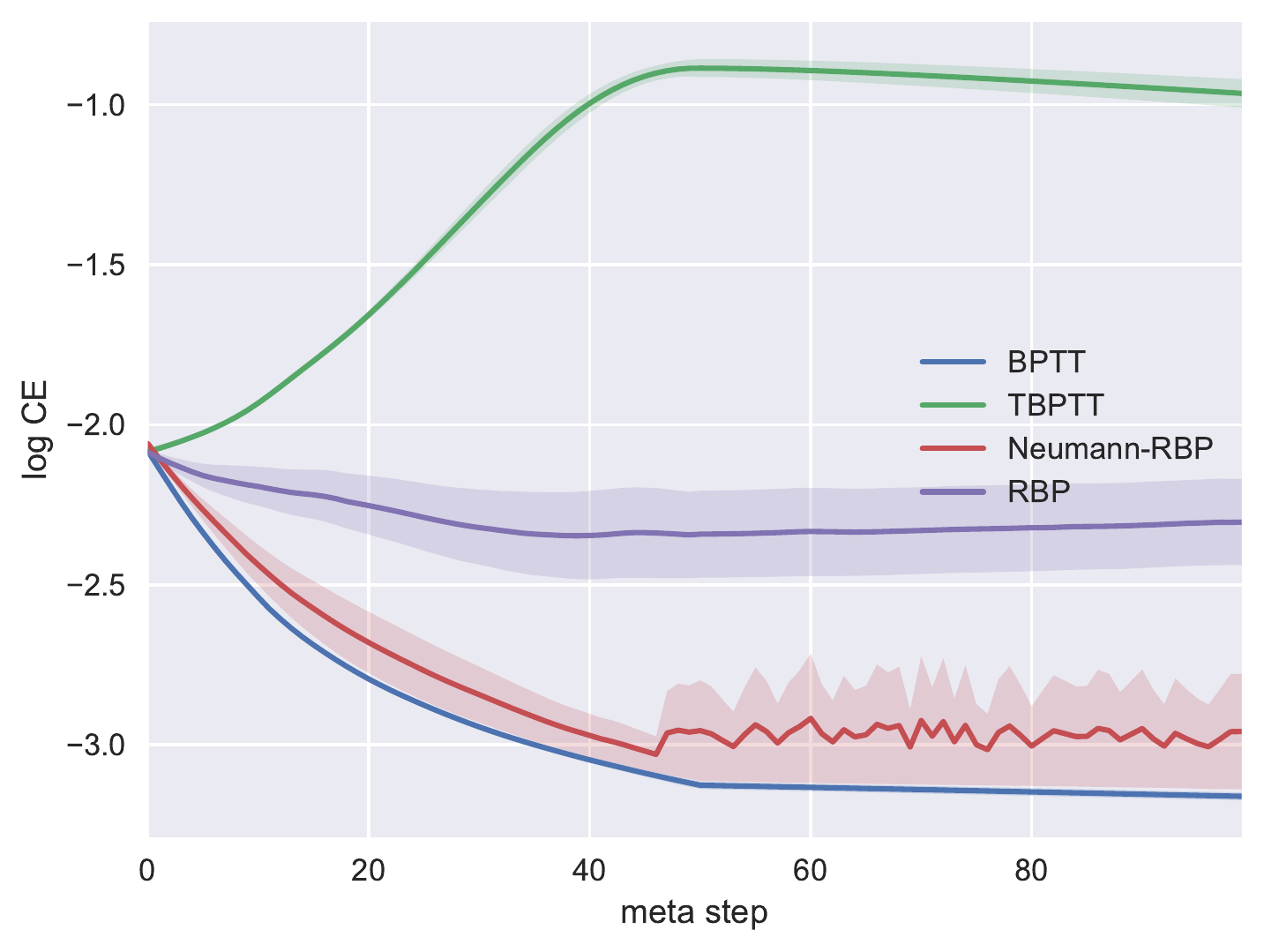}\\
    (a) & (b) & (c) & (d) \\
    \includegraphics[width=0.245\linewidth]{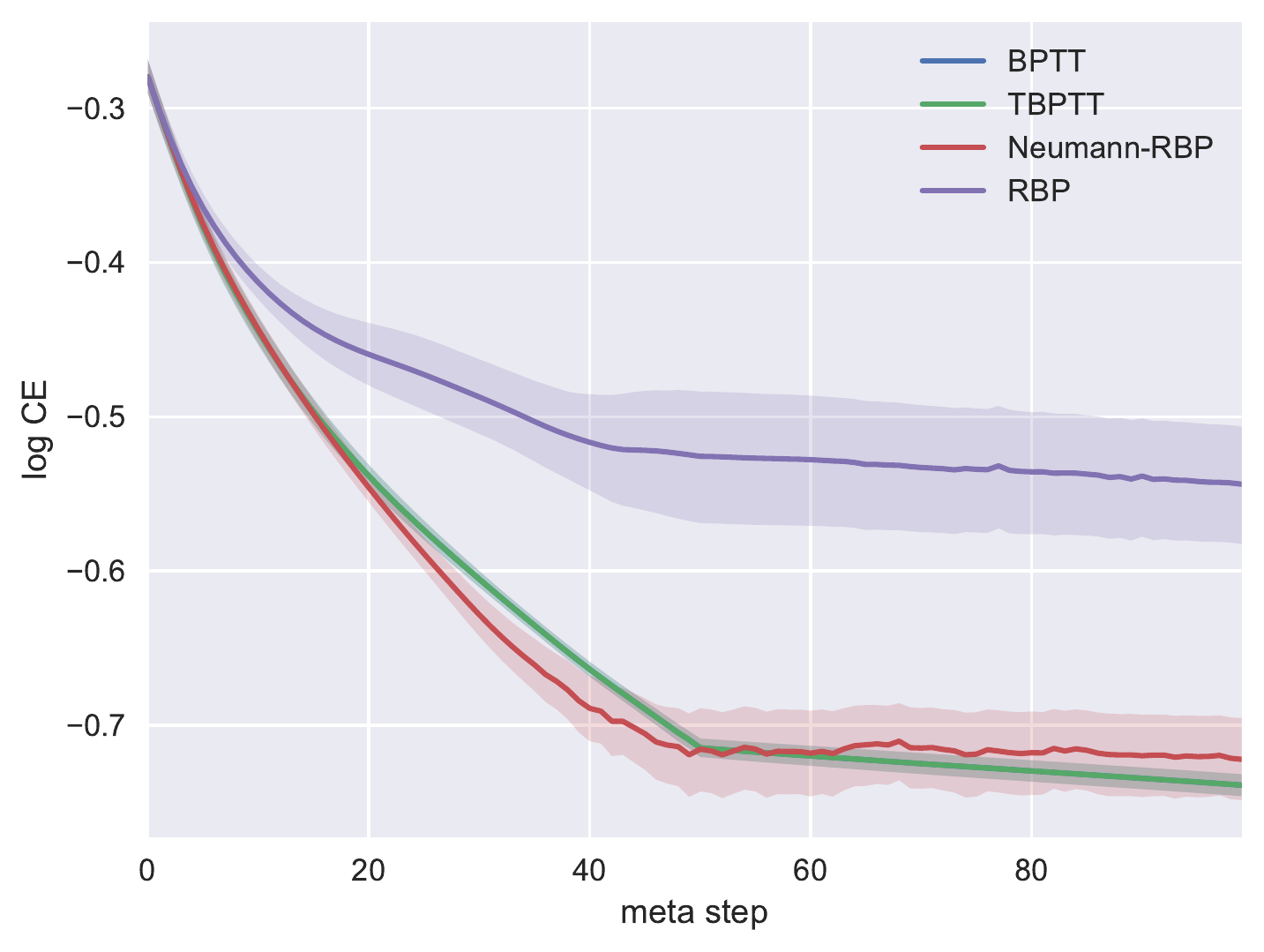}&
    \includegraphics[width=0.245\linewidth]{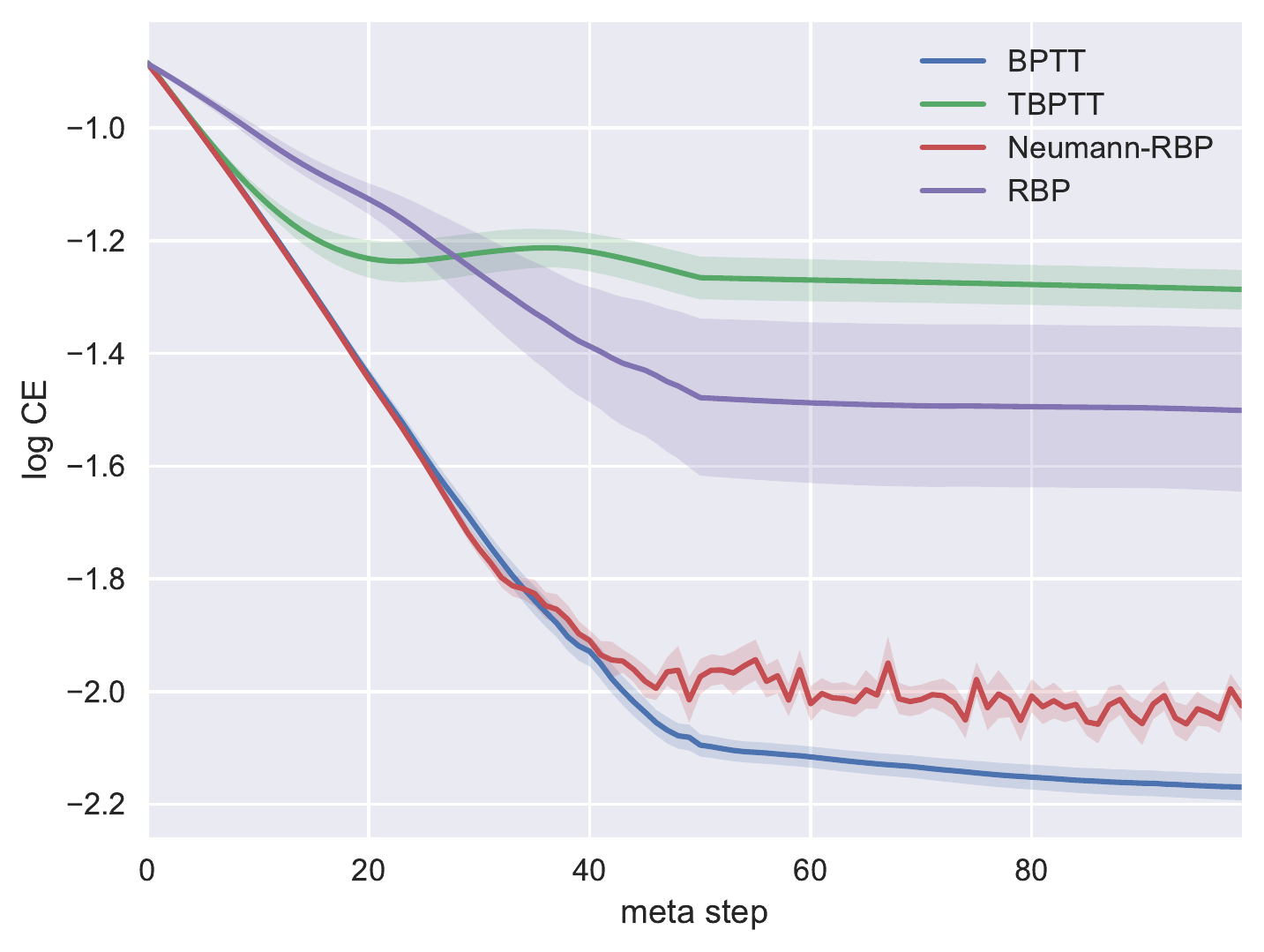}&
    \includegraphics[width=0.245\linewidth]{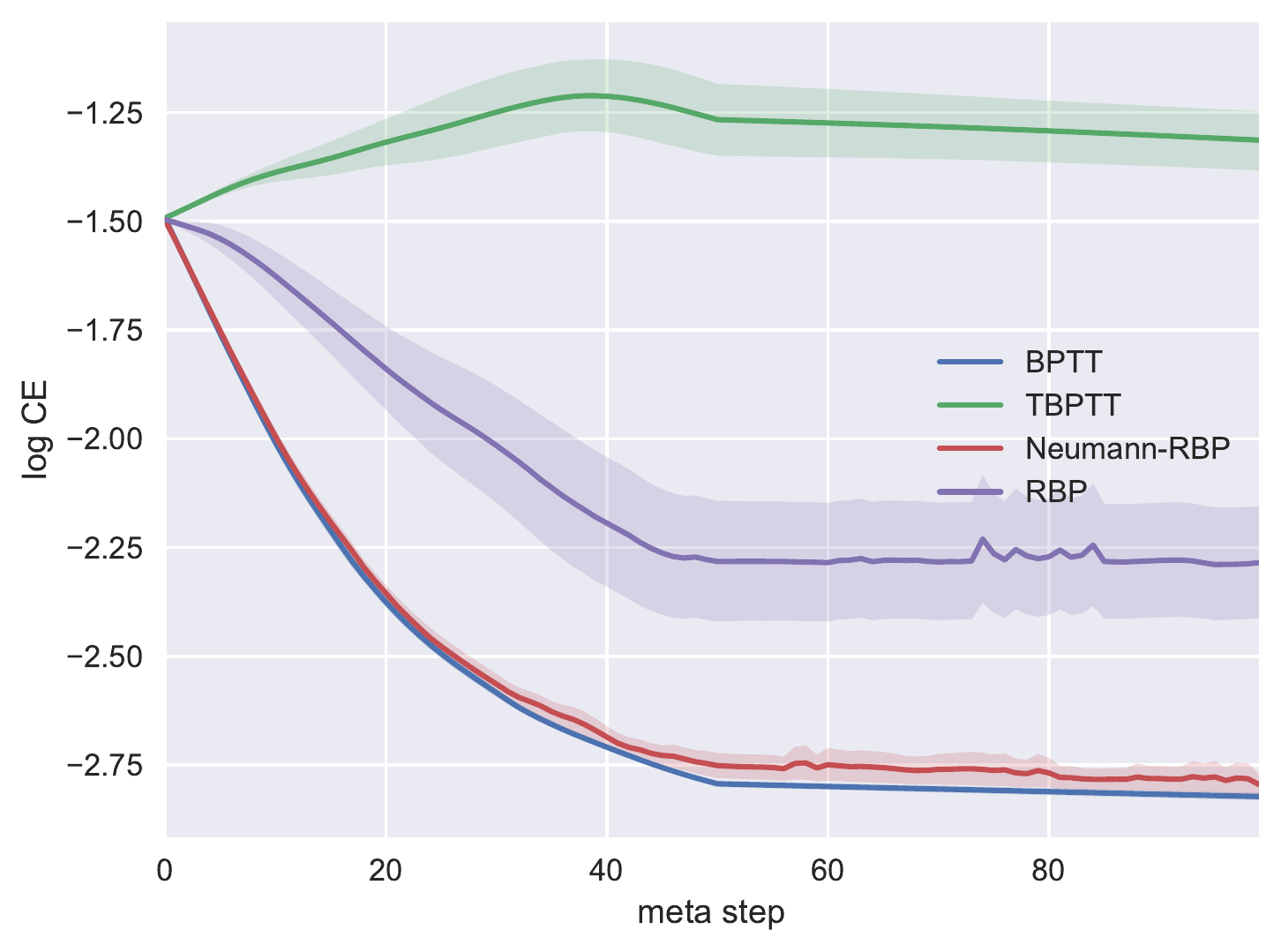}&
    \includegraphics[width=0.245\linewidth]{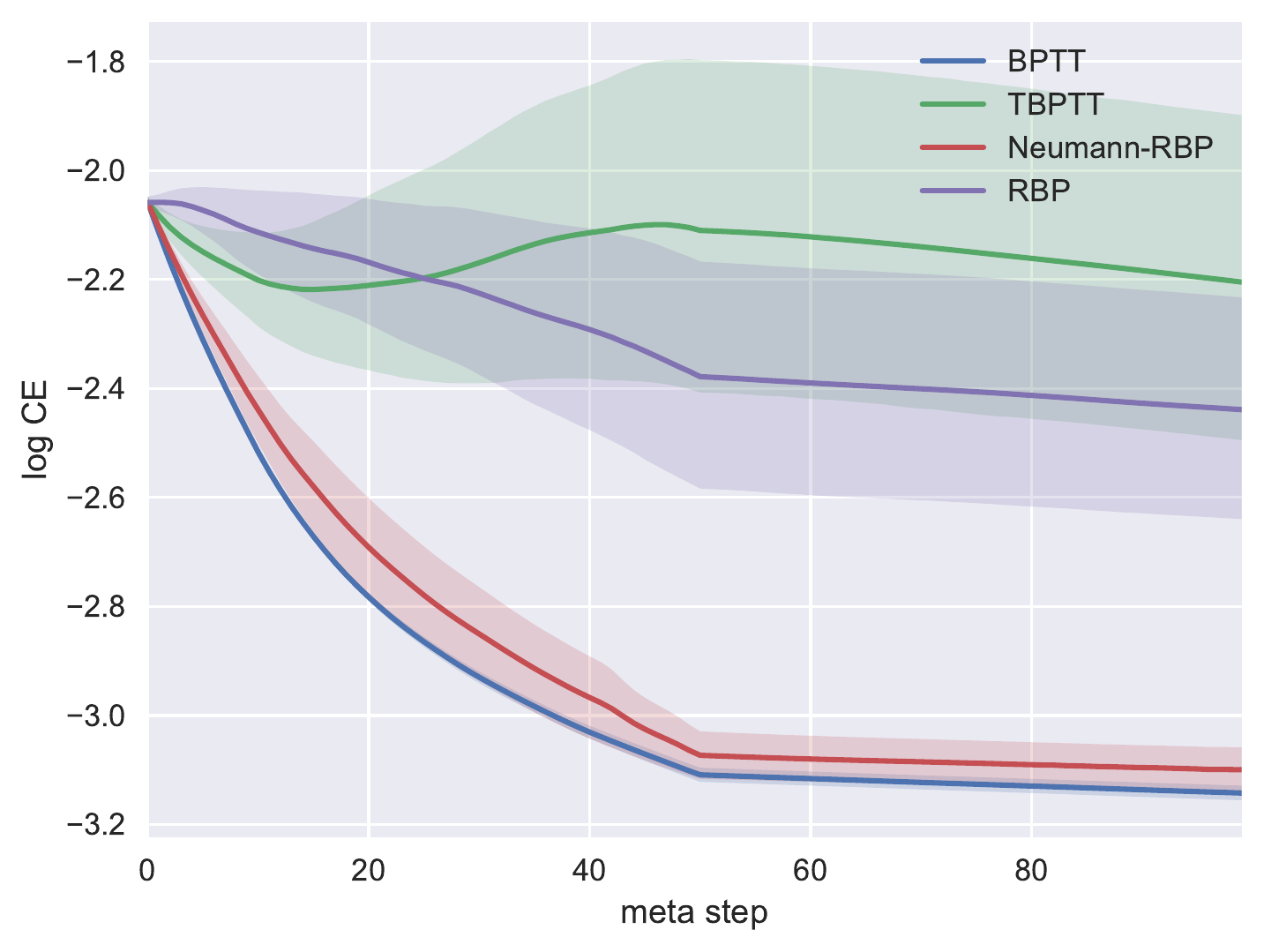}\\
    (e) & (f) & (g) & (h) \\  
\end{tabular}
\vspace{-0.3cm}
\caption{Meta training loss. Training and truncate steps per meta step are (a) (100, 50); (b) (500, 50); (c) (1000, 50); (d) (1500, 50); (e) (100, 100); (f) (500, 100); (g) (1000, 100); (h) (1500, 100).} 
\vspace{-0.3cm}
\label{fig:hyper_grad_meta_loss}
\end{figure*}

\begin{figure*}[t]
\renewcommand*{\arraystretch}{0.1}
\centering
\begin{tabular}{@{\hspace{0.1mm}}c@{\hspace{0.1mm}}c@{\hspace{0.1mm}}c@{\hspace{0.1mm}}c}
    \includegraphics[width=0.245\linewidth]{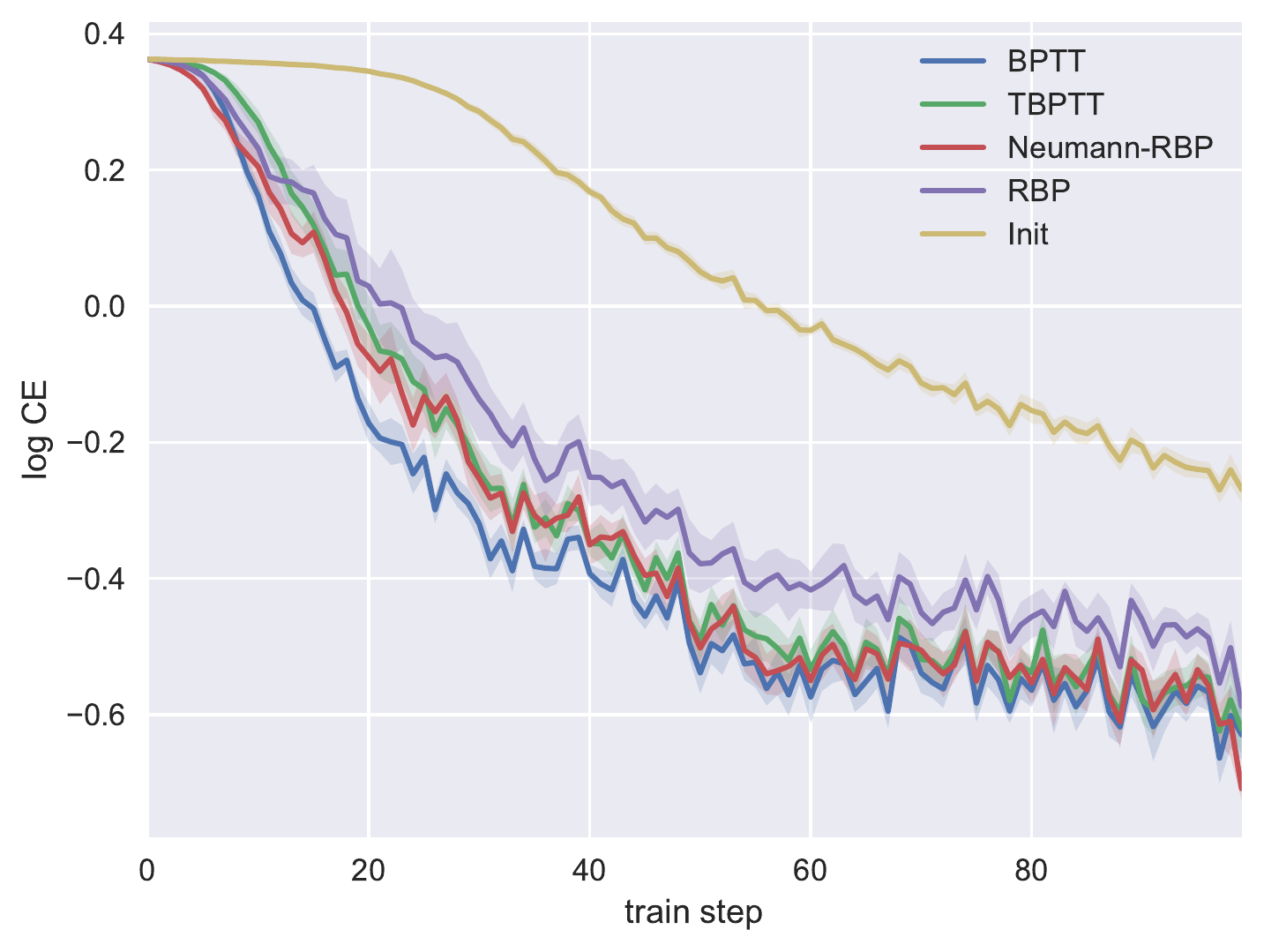}&
    \includegraphics[width=0.245\linewidth]{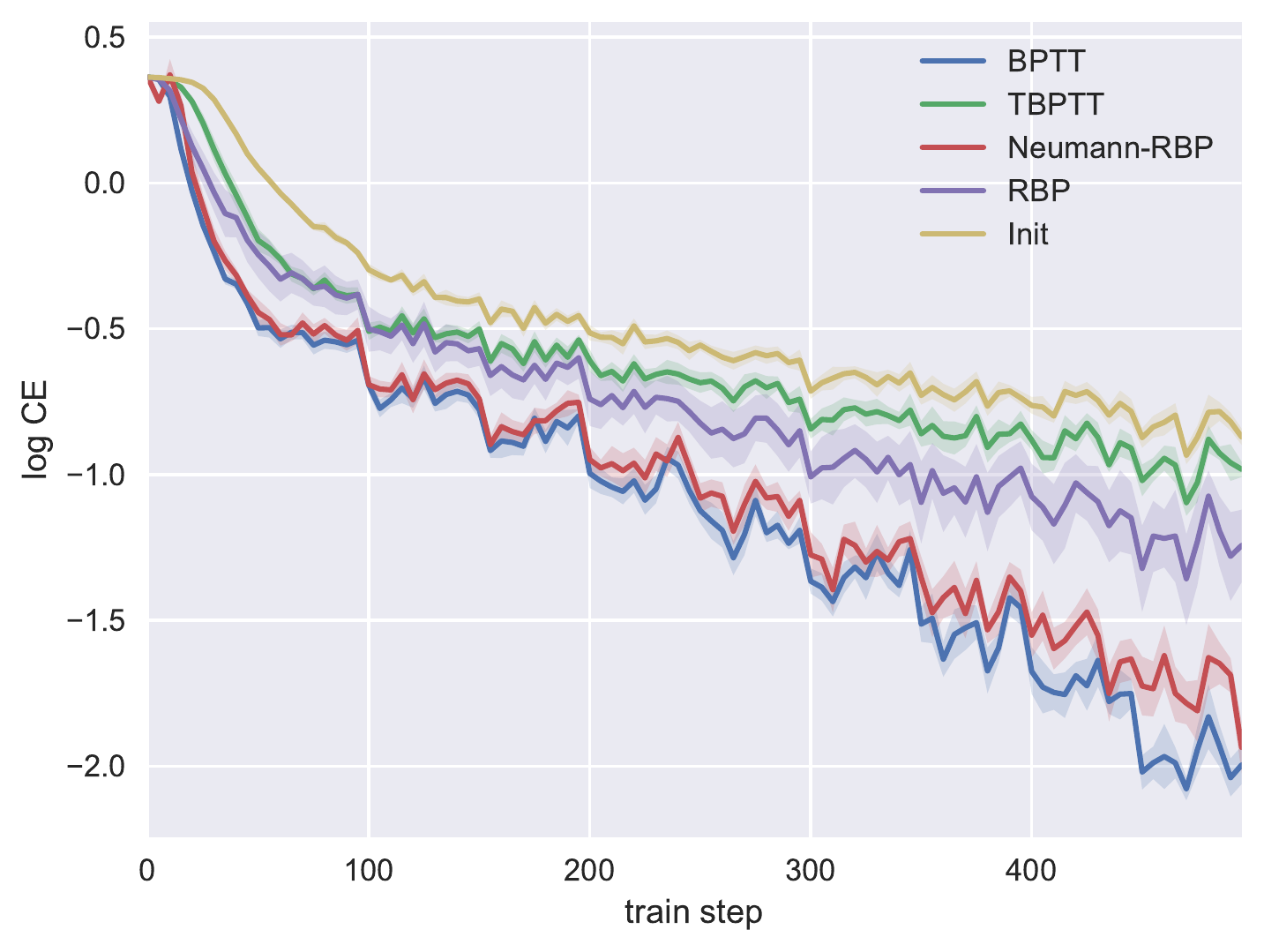}&
    \includegraphics[width=0.245\linewidth]{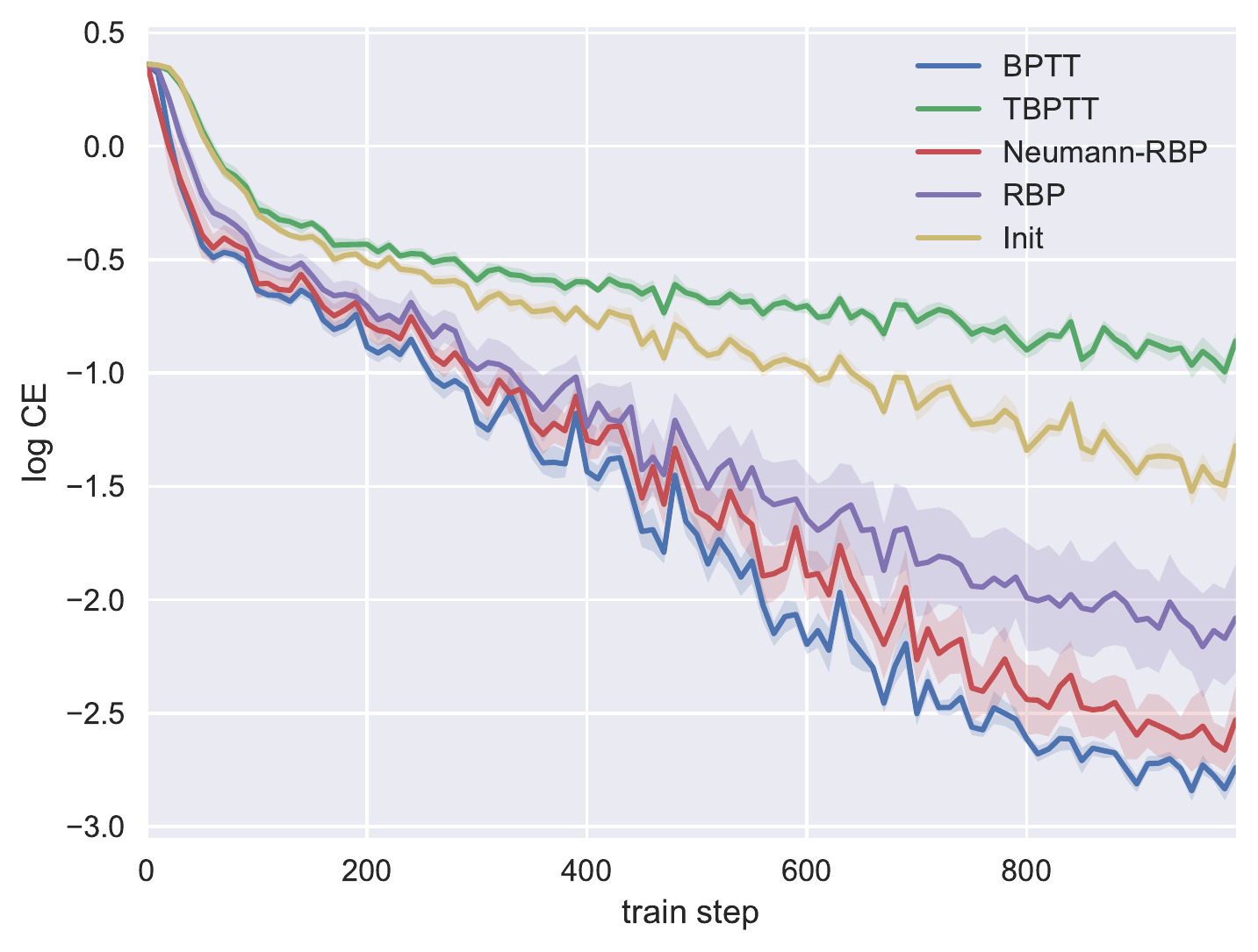}&
    \includegraphics[width=0.245\linewidth]{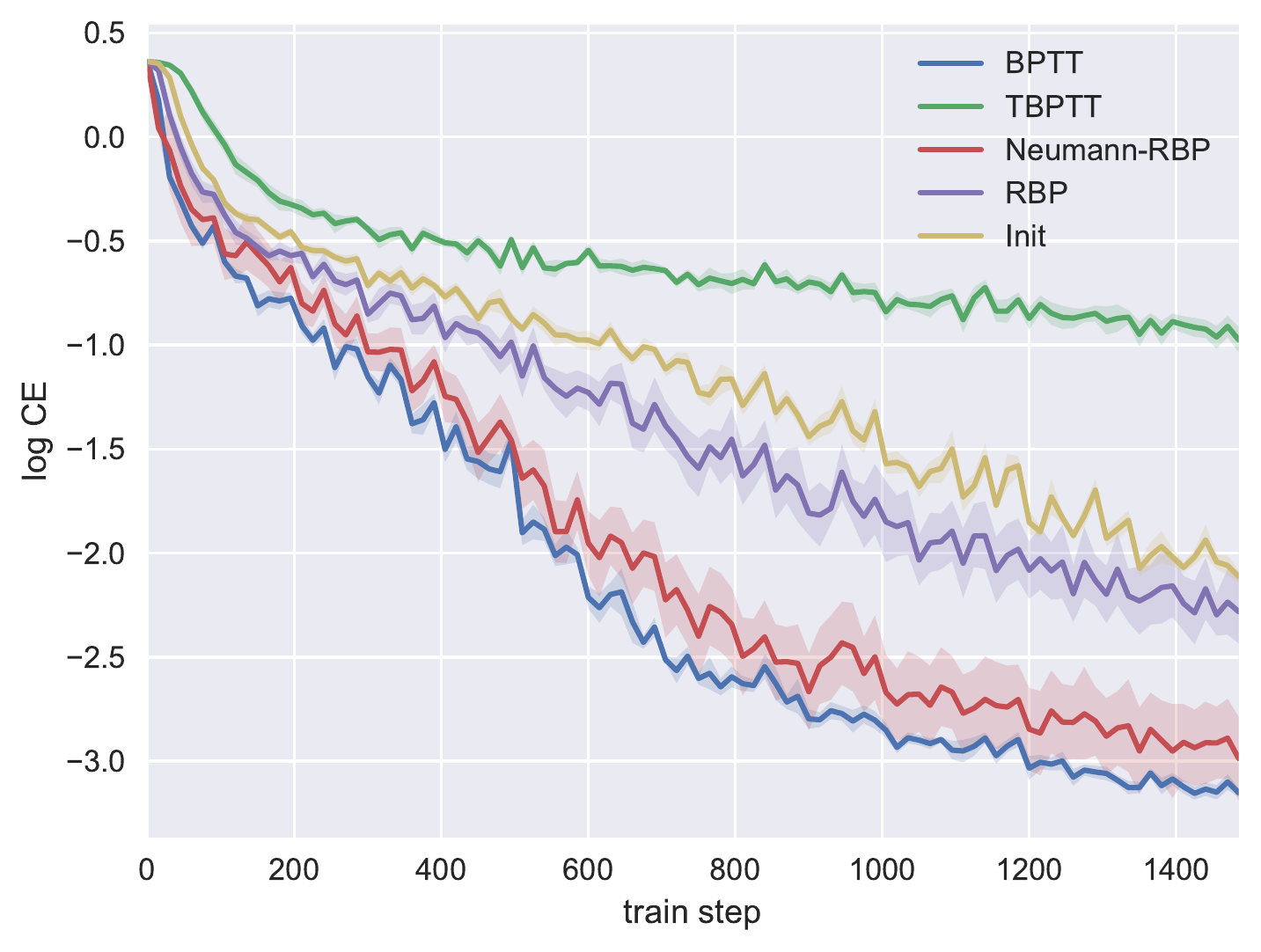}\\
    (a) & (b) & (c) & (d) \\
    \includegraphics[width=0.245\linewidth]{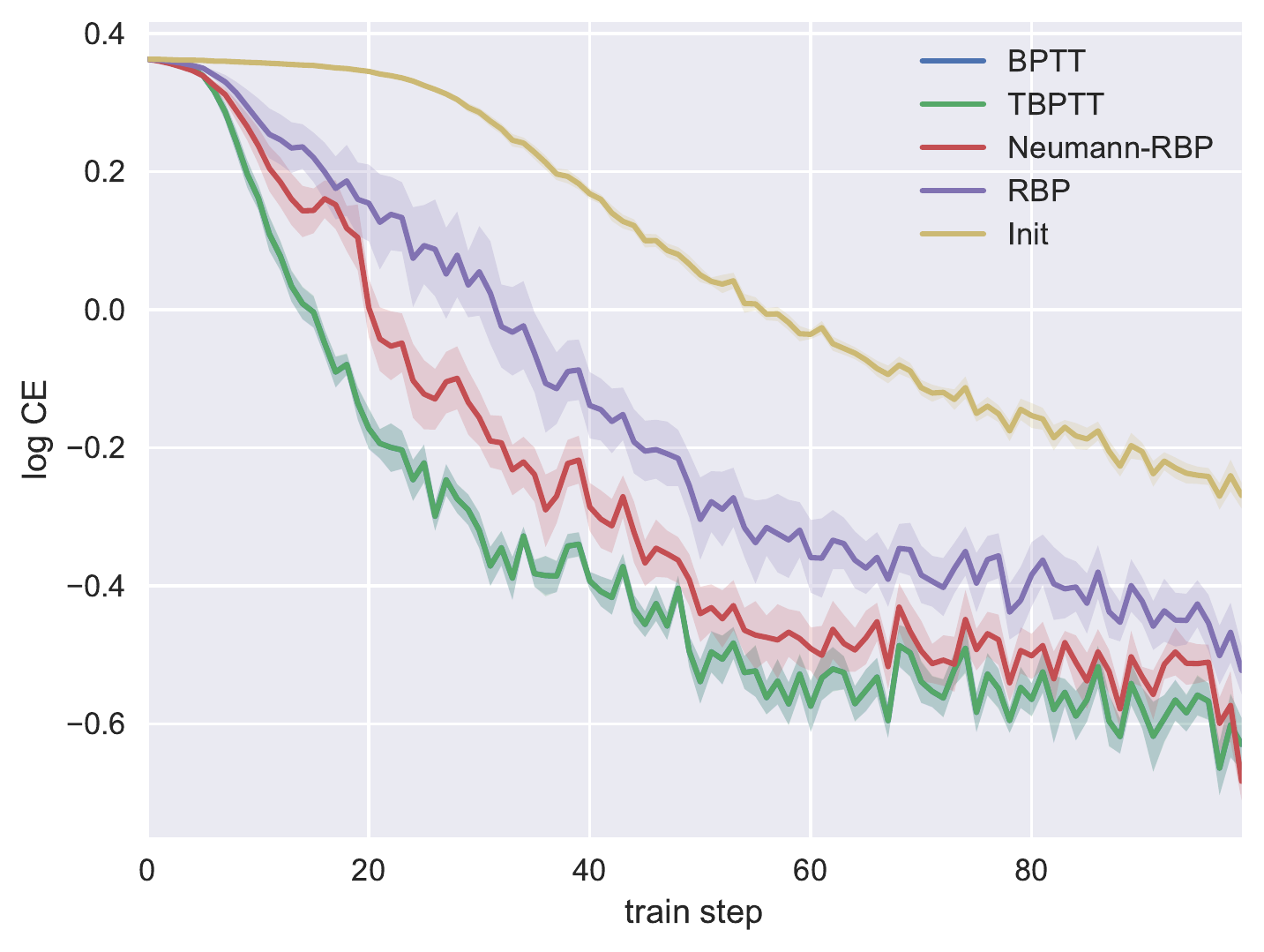}&
    \includegraphics[width=0.245\linewidth]{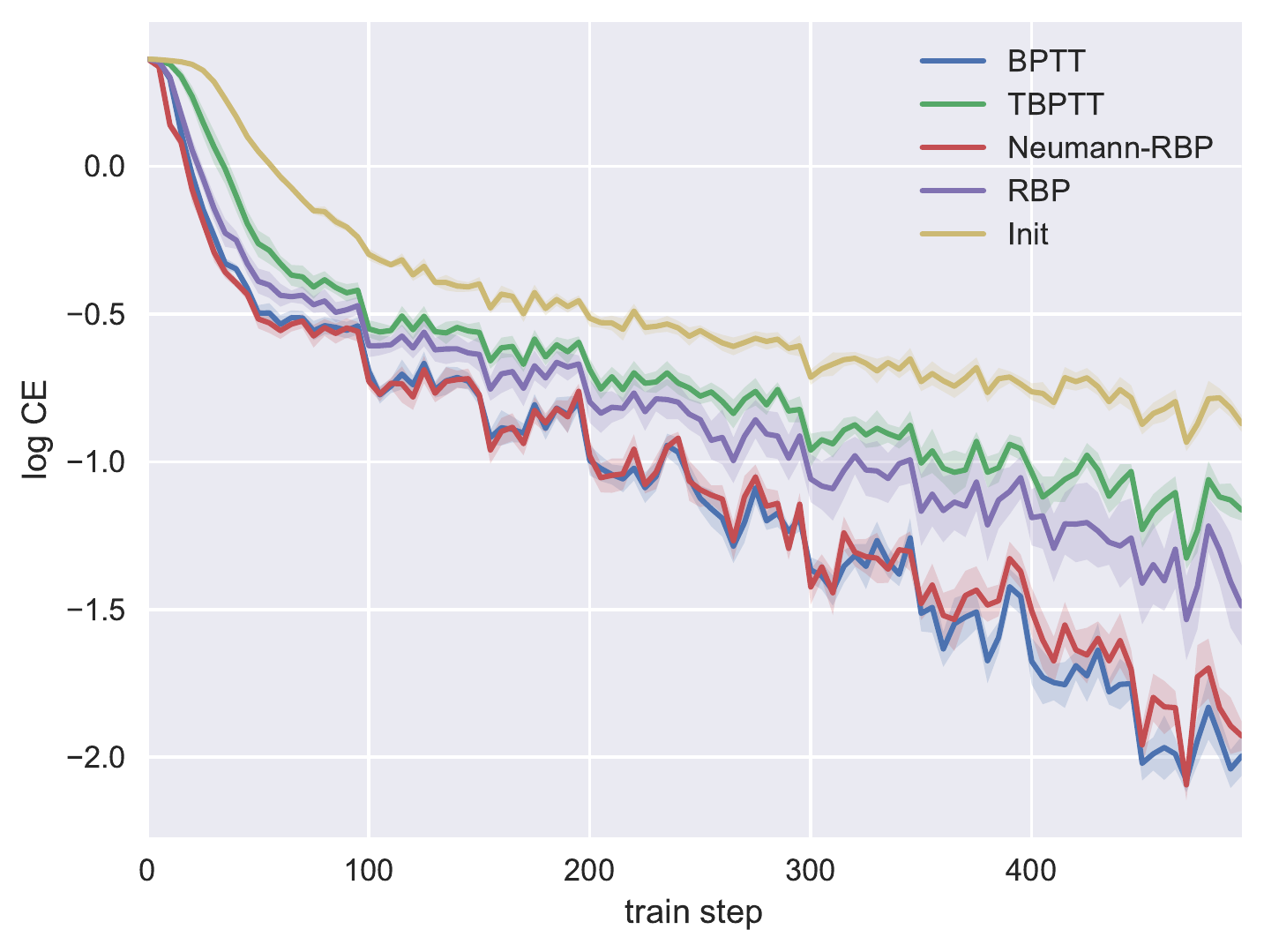}&
    \includegraphics[width=0.245\linewidth]{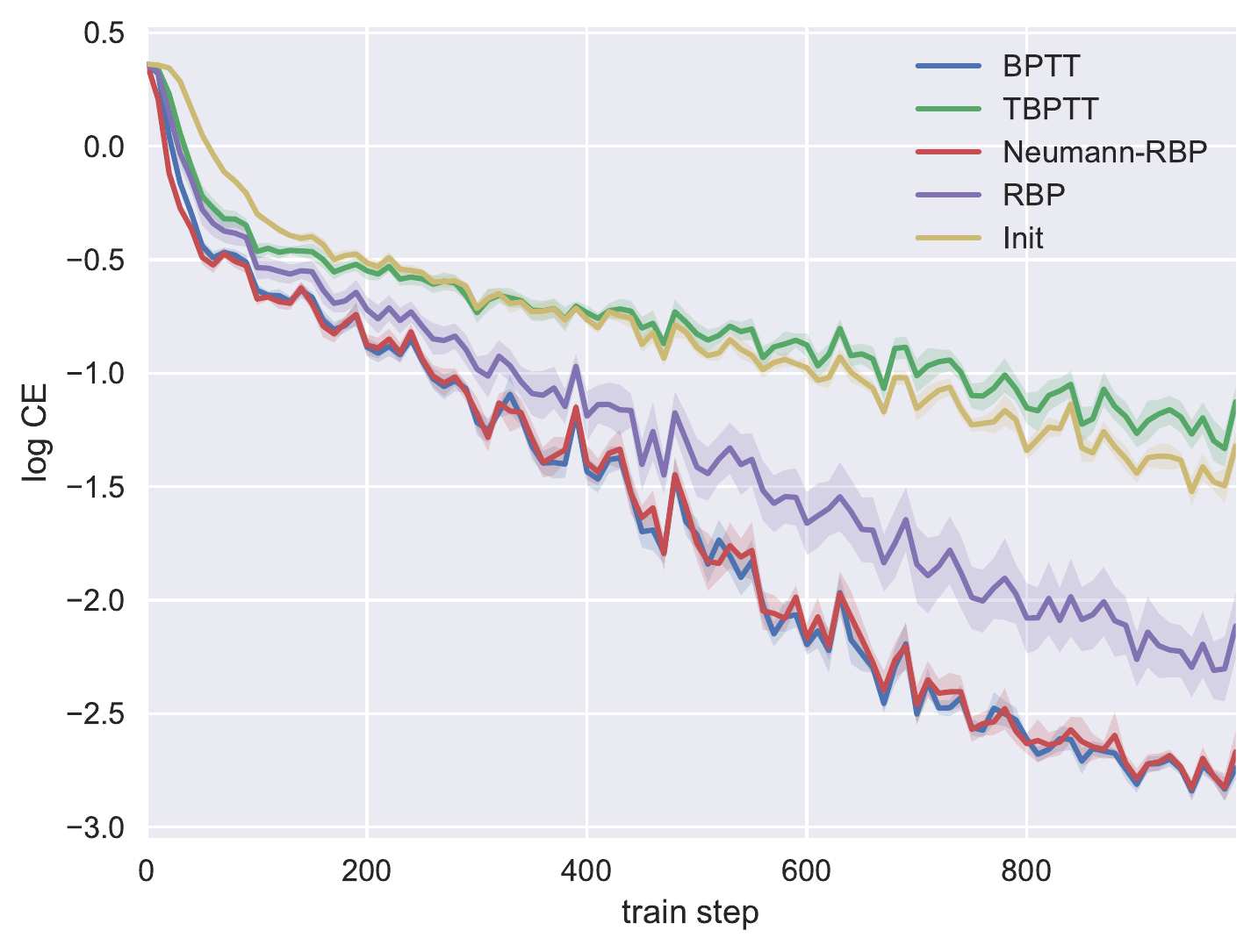}&
    \includegraphics[width=0.245\linewidth]{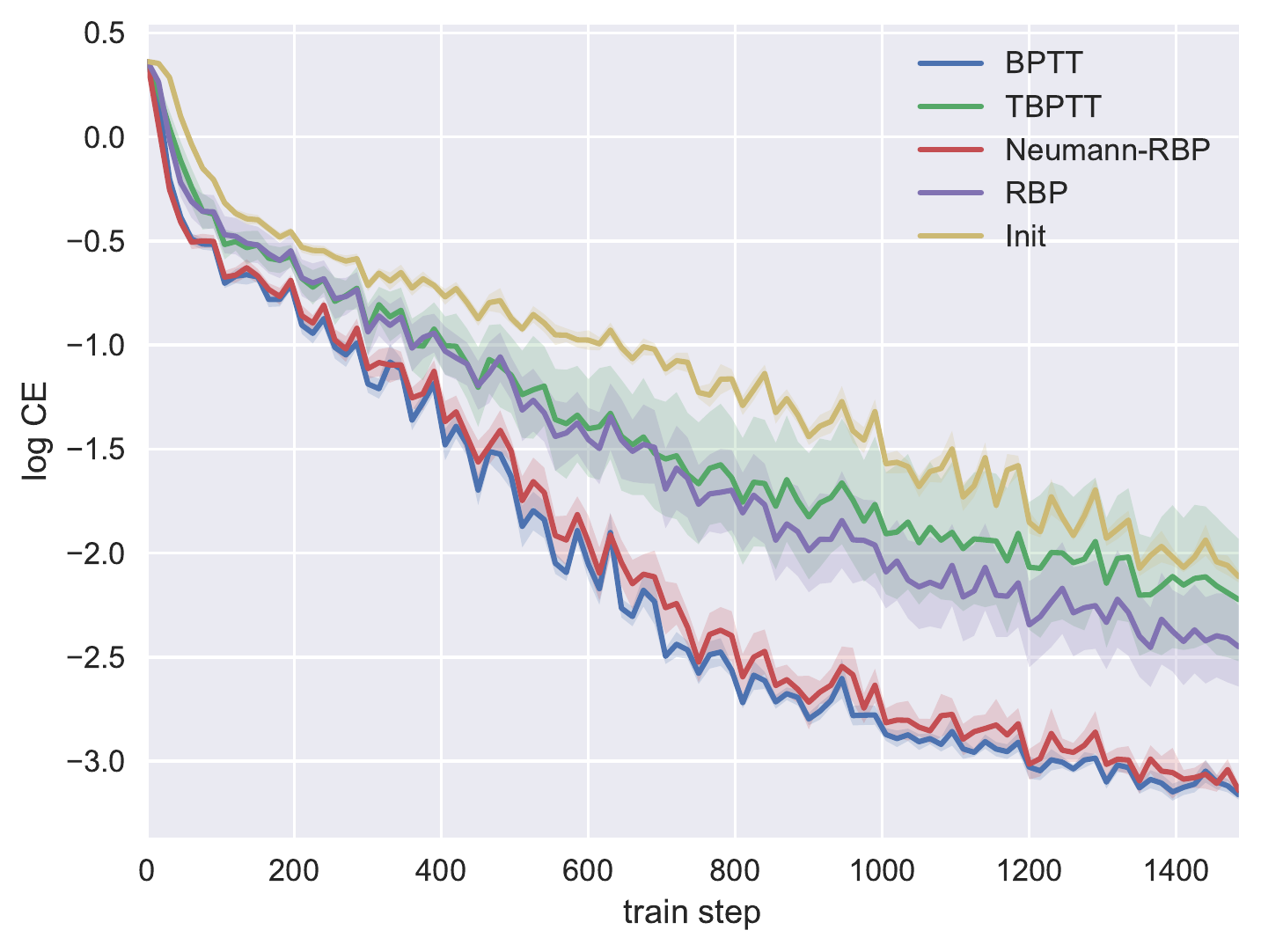}\\
    (e) & (f) & (g) & (h) \\
\end{tabular}
\vspace{-0.3cm}
\caption{Training loss at last meta step. Training and truncate steps per meta step are (a) (100, 50); (b) (500, 50); (c) (1000, 50); (d) (1500, 50); (e) (100, 100); (f) (500, 100); (g) (1000, 100); (h) (1500, 100).} 
\vspace{-0.5cm}
\label{fig:hyper_grad_train_loss}
\end{figure*}

In our next experiment, we test the abilities of RBP to perform hyperparameter optimization. 
In this experiment, we view the optimization process as a RNN.
When training a neural network, the model parameters, e.g., weights and bias, are regarded as the hidden states of the RNN.
Hyperparameters such as learning rate and momentum are learnable parameters of this `meta-learning' RNN. 
Here we focus on the gradient based hyperparameter optimization rather than the gradient-free one~\cite{snoek2012practical}.
We adopt the same experiment setting as in~\cite{maclaurin2015gradient}, using an initial learning rate of $\exp(-1)$ and momentum $0.5$.
The optimization is on a fully connected network with 4 layers, of sizes $784$, $50$, $50$, and $50$. 
For each layer, we associate one learning rate and one momentum with weight and bias respectively which results in $16$ hyperparameters in total. 
We use tanh non-linearities and train on $10,000$ examples on MNIST.
At each forward step of the RNN, i.e., at each optimization step, a different mini-batch of images is fed to the model. 
This is different from the previous setting where input data is fixed.
However, since the mini-batches are assumed to be i.i.d., the sequential input data can be viewed as sampled from a stationary distribution.  
We can thus safely apply RBP as the steady state holds in expectation.
In terms of implementation, we just need to average the meta gradient returned by RBPs or TBPTT across multiple mini-batches at the end of one meta step.
We use Adam~\cite{kingma2014adam} as the meta optimizer and set the learning rate to $0.05$.
The initialization of the fully connected network at each meta step is controlled to be the same.
For each hyper-gradient method, we run experiments $10$ times with $10$ different random seeds which are shared across different methods. 

Fig.~\ref{fig:hyper_grad_meta_loss} shows the meta training losses under different training and truncation steps.
For better understanding, one can consider the training step as the unrolling step of the RNN. 
Truncation step is the the number of steps that TBPTT and RBPs execute.
From the figure, we can see that as the number of training steps increases (e.g., from (a) to (d)), the meta loss becomes smoother.
This makes sense since more steps make the training per meta step closer to convergence.
Another surprising phenomenon we found is the meta loss of TBPTT becomes worse when the training step increases.
One possible explanation is that the initial meta training loss of small training steps (e.g., (a)) is still very high as you can see from the log y-axis whereas the one with large training step, e.g., (d) is much lower.
The probability of using incorrect gradients to decrease the meta loss in case (a) is most likely higher than that of (d) since it is farther from convergence.
On the other hand, our Neumann-RBP performs consistently better than the original RBP and TBPTT which empirically validates that Neumann-RBP provides better estimation of the gradient in this case.
The potential reason why RBP performs poorly is that the stochasticity of mini-batches worsen the instability issue.
Training losses under similar settings at the last meta step are also provided in Fig.~\ref{fig:hyper_grad_train_loss}.
We can see that at the end of hyperparameter optimization, our Neumann-RBP generally matches the performance of BPTT and outperforms the other methods.
Fig.~\ref{fig:vis_trajectory} depicts the trajectories of hidden states in 2D space via t-SNE~\cite{maaten2008visualizing}.
From the figure, we can see that as the meta training goes on, TBPTT tends to oscillate whereas Neumann-RBP converges, which matches the finding in the train loss curves in Fig.~\ref{fig:hyper_grad_train_loss}.

We also compare the running time and memory cost of our unoptimized Neumann-RBP implementation with the standard BPTT, i.e., using autograd of PyTorch.
With $1000$ training steps, one meta step of BPTT cost $310.4$s and $4061$MB GPU memory in average.
We take BPTT as the reference cost and report the ratio BPTT's cost divided by Neumann-RBP's in Table~\ref{table:rbp_runtime}.
All results are reported as the average of $10$ runs.
Even without optimizing the code, the practical runtime and memory footprint advantages of Neumann-RBP over BPTT is still significant.

\section{Conclusion}\label{sect:con}
In this paper, we revisit the RBP algorithm and discuss its assumptions and how to satisfy them for deep learning.
Moreover, we propose two variants of RBP based on conjugate gradient on normal equations and Neumann series.
Connections between Neumann-RBP and TBPTT are established which sheds some light on analyzing the approximation quality of the gradient of TBPTT.
Experimental results on diverse tasks demonstrate that Neumann-RBP is a stable and efficient alternative to original RBP and is promising for several practical problems.
In the future, we would like to explore RBP on hyperparameter optimization with large scale deep neural networks.

\section*{Acknowledgements}
We thank Barak Pearlmutter for the enlightening discussion and anonymous ICML reviewers for valuable comments.
R.L. was supported by Connaught International Scholarships.
R.L., E.F., L.Z., K.Y., X.P., R.U. and R.Z. were supported in part by the Intelligence Advanced Research Projects Activity (IARPA) via Department of Interior/Interior Business Center (DoI/IBC) contract number D16PC00003. 
K.Y. and X.P. were supported in part by BRAIN Initiative grant NIH 5U01NS094368. 
The U.S. Government is authorized to reproduce and distribute reprints for Governmental purposes notwithstanding any copyright annotation thereon. Disclaimer: the views and conclusions contained herein are those of the authors and should not be interpreted as necessarily representing the official policies or endorsements, either expressed or implied, of IARPA, DoI/IBC, or the U.S. Government.


\bibliography{rbp}
\bibliographystyle{icml2018}

\end{document}


\twocolumn[
\icmltitle{Appendix: Reviving and Improving Recurrent Back-Propagation}




\begin{icmlauthorlist}
\icmlauthor{Renjie Liao}{A,B,C}
\icmlauthor{Yuwen Xiong}{A,B}
\icmlauthor{Ethan Fetaya}{A,C}
\icmlauthor{Lisa Zhang}{A,C}
\icmlauthor{KiJung Yoon}{D}
\icmlauthor{Xaq Pitkow}{D,E}
\icmlauthor{Raquel Urtasun}{A,B,C}
\icmlauthor{Richard Zemel}{A,C,F}
\end{icmlauthorlist}

\icmlaffiliation{A}{Department of Computer Science, University of Toronto}
\icmlaffiliation{B}{Uber ATG Toronto}
\icmlaffiliation{C}{Vector Institute}
\icmlaffiliation{D}{Department of Electrical and Computer Engineering, Rice University}
\icmlaffiliation{E}{Department of Neuroscience, Baylor College of Medicine}
\icmlaffiliation{F}{Canadian Institute for Advanced Research}
\icmlcorrespondingauthor{Renjie Liao}{rjliao@cs.toronto.edu}





\vskip 0.3in
]




A similar technique to RBP was discovered in physics by Richard Feynman~\cite{feynman1939forces} in modeling molecular forces back in the 1930's. When the energy of molecules are in steady state, the forces on the molecules are defined as the gradient of energy w.r.t. the position parameters of molecules.

\section{Assumptions of RBP}

In this section, we will further discuss the assumptions imposed by RBP.

\subsection{Contraction Map}
Contraction map is often adopted for constructing a convergent dynamic system. 
But it also largely restricts the model capacity and is also hard to satisfy for general neural networks.
Moreover, as pointed out by~\cite{li2015gated}, on a special cycle graph, contraction map will make the impact of one node on the other decay exponentially with their distance.

\subsection{Local Regularization At Convergence}
Recall that in order to apply implicit function theorem, we just to need to make sure that no singular value of the Jacobian is zero. 
In particular, note that $\vert \text{det}(I - J_{F, h^{\ast}}) \vert > 0$ is equivalent to $\vert \text{det}(I - J_{F, h^{\ast}}) \vert^{2} > 0$, one can equivalently rewrite the condition \RNum{2} as,
\begin{align}
\vert \text{det}(I - J_{F, h^{\ast}}) \vert^{2} & = \prod_{i} \vert \sigma_{i}(I - J_{F, h^{\ast}}) \vert^{2} > 0.
\end{align}
Note that for any square matrix $A$, we have,
\begin{align}
\text{det}(A^{\top} A) = \text{det}(A^{\top}) \text{det}(A) = \text{det}(A)^{2}
\end{align}
Therefore, we can instead focus on the positive semi-definite matrix $(I - J_{F, h^{\ast}})^{\top} (I - J_{F, h^{\ast}})$.
The condition can be equivalently stated as below,
\begin{align}
\lambda_{\text{min}}\left( (I - J_{F, h^{\ast}})^{\top} (I - J_{F, h^{\ast}}) \right) > 0,
\end{align}
where $\lambda_{\text{min}}$ is the smallest eigenvalue.
We now briefly discuss two ways of maximizing the smallest eigenvalue.

\paragraph{Maximizing Lower Bound} One way to achieve this is to enforce the lower bound of $\lambda_{\text{min}}$ is larger than zero.
Specifically, according to Gershgorin Circle Theorem, if $A$ is positive definite, we have,
\begin{align}
\lambda_{\text{min}}\left( A \right) \ge 1 - \Vert A - I \Vert_{\infty} \ge 1 - \sqrt{n} \Vert A - I \Vert_{F}.
\end{align}
We can instead maximize the lower bound by adding the term $\max\left(0, \sqrt{n} \Vert A - I \Vert_{F} - 1 \right)$ to the loss function.
One may need to add a small constant to $A$ if $A$ is only positive semi-definite rather than positive definite.
Note that the RHS term is not necessarily larger than zero.

\paragraph{Direct Maximizing By Differentiating Through Lanczos}

Another possible solution is to treat Lanczos algorithm as a fix computational graph to compute the smallest eigenvalue of $(I - J_{F, h^{\ast}})^{\top} (I - J_{F, h^{\ast}})$. 
The most expansive operator in one step Lanczos is the matrix-vector product $(I - J_{F, h^{\ast}})^{\top} (I - J_{F, h^{\ast}})v$ which has doubled complexity as back-propagation.
Differentiating through Lanczos via BPTT is even more expansive which also provides rooms for applying RBP. 
We can add a term $\max\left(0, -\lambda_{\text{min}} \right)$ to the loss function.
Note that the computational complexity of this method is generally high which seems to be impractical for large scale problems.

\section{Recurrent Back-Propagation based on Neumann Series}

In this section, we restate the propositions and prove them.
\begin{prop}\label{prop:neumann_rbp_1}
Assume that we have a convergent RNN which satisfies the implicit function theorem conditions.
If the Neumann series $\sum_{t=0}^{\infty} J_{F, h^{\ast}}^{t}$ converges, then the full Neumann-RBP is equivalent to BPTT. 
\end{prop}

\begin{proof}
Since Neumann series $\sum_{t=0}^{\infty} J_{F, h^{\ast}}^{t}$ converges, we have $(I - J_{F, h^{\ast}})^{-1} = \sum_{t=0}^{\infty} J_{F, h^{\ast}}^{t}$. By substituting it into Eq.~(8), we have,
{\footnotesize
\begin{align}\label{eq:full_rbp}
\frac{\partial L}{\partial w_F} & = \frac{\partial L}{\partial y} \frac{\partial y}{\partial h^{\ast}} \left(I - J_{F, h^{\ast}} \right)^{-1} \frac{\partial F(x, w_F, h^{\ast})}{\partial w_F} \nonumber \\
& = \frac{\partial L}{\partial y} \frac{\partial y}{\partial h^{\ast}} \left(I + J_{F, h^{\ast}} + J_{F, h^{\ast}}^2 + \dots \right) \frac{\partial F(x, w_F, h^{\ast})}{\partial w_F} \nonumber \\
& = \frac{\partial L}{\partial y} \sum_{k=0}^{\infty} \frac{\partial y}{\partial h^{\ast}} J_{F, h^{\ast}}^{k} \frac{\partial F(x, w_F, h^{\ast})}{\partial w_F}.
\end{align}}
Therefore, the full Neumann-RBP is equivalent to original RBP which is further equivalent to BPTT due the implicit function theorem. 
\end{proof}

\begin{prop}\label{prop:neumann_rbp_2}
For the above RNN, let us denote its convergent sequence of hidden states as $h^{0}, h^{1}, \dots, h^{T}$ where $h^{\ast} = h^{T}$ is the steady state.
If we further assume that there exists some step $K$ where $0 < K \le T$ such that $h^{\ast} = h^{T} = h^{T-1} = \dots = h^{T-K}$, then $K$-step Neumann-RBP is equivalent to $K$-step TBPTT.
\end{prop}
\begin{proof}
Since Neumann series $\sum_{t=0}^{\infty} J_{F, h^{\ast}}^{t}$ converges, we have $(I - J_{F, h^{\ast}})^{-1} = \sum_{t=0}^{\infty} J_{F, h^{\ast}}^{t}$. By substituting it into Eq.~(8) and truncate $K$ steps from the end, then the gradient of TBPTT is  
{\footnotesize
\begin{align}\label{eq:k_step_rbp}
\frac{\partial L}{\partial w_F} & = \frac{\partial L}{\partial y} \frac{\partial y}{\partial h^{\ast}} \sum_{k=0}^{K} \left( \prod_{i=T}^{T-k} J_{F, h^{i}} \right) \frac{\partial F(x, w_F, h^{T-k})}{\partial w_F}. \nonumber \\
& = \frac{\partial L}{\partial y} \sum_{k=0}^{K} \frac{\partial y}{\partial h^{\ast}} J_{F, h^{\ast}}^{k} \frac{\partial F(w_F, h^{\ast})}{\partial w_F},
\end{align}}
where the second row uses the fact that $h^{\ast} = h^{T} = h^{T-1} = \dots = h^{T-K}$.
Comparing Eq.~(\ref{eq:full_rbp}) and Eq.~(\ref{eq:k_step_rbp}), it is clear that we exactly recover the $K$-step Neumann-RBP.
\end{proof}

\begin{prop}
If the Neumann series $\sum_{t=0}^{\infty} J_{F, h^{\ast}}^{t}$ converges, then the error between $K$-step and full Neumann series is as following,
{\footnotesize
\begin{align}
\left\Vert \sum_{t=0}^{K} J_{F, h^{\ast}}^{t} - (I - J_{F, h^{\ast}})^{-1} \right\Vert \le \left\Vert (I - J_{F, h^{\ast}})^{-1} \right\Vert \left\Vert J_{F, h^{\ast}} \right\Vert^{K+1} \nonumber 
\end{align}}
\end{prop}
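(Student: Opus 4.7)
The plan is to reduce everything to the tail of the Neumann series and then apply submultiplicativity of the operator norm. For brevity, write $J = J_{F, h^{\ast}}$.

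First, I would use the hypothesis that $\sum_{t=0}^{\infty} J^{t}$ converges to $(I-J)^{-1}$ to express the quantity inside the norm as a tail sum:
\begin{align*}
\sum_{t=0}^{K} J^{t} - (I - J)^{-1} = -\sum_{t=K+1}^{\infty} J^{t}.
\end{align*}
Next I would factor $J^{K+1}$ out of the tail and recognize the remaining factor as another copy of the full Neumann series:
\begin{align*}
\sum_{t=K+1}^{\infty} J^{t} = J^{K+1} \sum_{s=0}^{\infty} J^{s} = J^{K+1} (I - J)^{-1}.
\end{align*}
This step uses only the fact that $J$ commutes with its own powers, so the index shift is legal, together with absolute convergence of the series (inherited from the assumed convergence) to justify rearranging the sum.

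Finally, taking norms and applying submultiplicativity together with $\Vert J^{K+1} \Vert \le \Vert J \Vert^{K+1}$ yields the stated bound. Since the statement is an identity followed by one application of $\Vert AB \Vert \le \Vert A \Vert \Vert B \Vert$, there is no real obstacle; the only point worth being careful about is making sure the rearrangement in the tail is justified by the assumed convergence of the Neumann series, which is immediate.
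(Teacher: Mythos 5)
Your proof is correct, and it reaches the same key identity as the paper --- that the truncation error equals $(I - J_{F,h^{\ast}})^{-1} J_{F,h^{\ast}}^{K+1}$ up to sign and ordering --- but by a slightly different route. The paper never touches the infinite tail: it uses the finite telescoping identity $(I - J)\sum_{t=0}^{K} J^{t} = I - J^{K+1}$, multiplies both sides by $(I-J)^{-1}$, and rearranges, so the only place convergence of the Neumann series enters is in guaranteeing $(I-J)^{-1} = \sum_{t=0}^{\infty} J^{t}$ (in fact it only needs invertibility of $I-J$). You instead write the error as the tail $-\sum_{t=K+1}^{\infty} J^{t}$ and factor out $J^{K+1}$, which requires a (mild) continuity/convergence justification that you correctly flag; pulling a fixed matrix factor out of a convergent series needs only continuity of matrix multiplication, not absolute convergence, so your worry is even lighter than stated. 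The final step is identical in both: submultiplicativity $\Vert AB \Vert \le \Vert A \Vert\, \Vert B \Vert$ together with $\Vert J^{K+1} \Vert \le \Vert J \Vert^{K+1}$ (the paper labels this ``Cauchy--Schwarz,'' but your naming is the accurate one). The paper's finite-algebra derivation is marginally more robust since it avoids infinite-sum manipulation; your tail-sum view is more transparent about why the error decays like $\Vert J \Vert^{K+1}$.
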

\begin{proof}
First note that, 
{\scriptsize
\begin{align}
(I - J_{F, h^{\ast}}) \left( \sum_{t=0}^{K} J_{F, h^{\ast}}^{t} \right) & = I \left( \sum_{t=0}^{K} J_{F, h^{\ast}}^{t} \right) - J_{F, h^{\ast}} \left( \sum_{t=0}^{K} J_{F, h^{\ast}}^{t} \right) \nonumber \\
& = I - J_{F, h^{\ast}}^{K+1}.
\end{align}}
Multiplying $(I - J_{F, h^{\ast}})^{-1}$ on both sides, we get,
{\scriptsize
\begin{align}
\left( \sum_{t=0}^{K} J_{F, h^{\ast}}^{t} \right) = (I - J_{F, h^{\ast}})^{-1} \left( I - J_{F, h^{\ast}}^{K+1} \right).
\end{align}}
With a bit rearrange, we have, 
{\scriptsize
\begin{align}
\left( \sum_{t=0}^{K} J_{F, h^{\ast}}^{t} \right) - (I - J_{F, h^{\ast}})^{-1} & = - (I - J_{F, h^{\ast}})^{-1} J_{F, h^{\ast}}^{K+1}.
\end{align}}
The result is then straightforward by using Cauchy-Schwarz inequality.
\end{proof}

We further prove the following proposition regarding to the relationship between Neumann-RBP and the original RBP algorithm.
\begin{prop}
$K+1$-step RBP algorithm returned the same gradient with $K$-step Neumann-RBP if $z_0$ in original RBP is initialized as a zero vector. 
\end{prop}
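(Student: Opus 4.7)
The plan is to unroll the original RBP fixed-point iteration explicitly and match it, term-by-term, against the truncated Neumann series formula established in Proposition~\ref{prop:neumann_rbp_2}. Recall that the original RBP algorithm targets the linear system $(I - J_{F,h^*}^\top) z = v$, where $v$ is the seed vector built from $\partial L/\partial y$ and $\partial y/\partial h^*$, via the Richardson-style iteration
\begin{align}
z_{t+1} = J_{F,h^*}^\top z_t + v.
\end{align}
Whenever the Neumann series converges, this iteration has the fixed point $(I - J_{F,h^*}^\top)^{-1} v$, which is exactly what the original RBP derivation requires.

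First, I will establish by induction on $t$ that, when $z_0 = 0$,
\begin{align}
z_t = \sum_{k=0}^{t-1} (J_{F,h^*}^\top)^{k} v.
\end{align}
The base case $z_1 = v$ is immediate from the recursion, and the inductive step substitutes the hypothesis into $z_{t+1} = J_{F,h^*}^\top z_t + v$ and reindexes. Specialising to $t = K+1$ then yields $z_{K+1} = \sum_{k=0}^{K} (J_{F,h^*}^\top)^{k} v$, which is precisely a partial Neumann sum of length $K+1$.

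Next, I will contract $z_{K+1}$ with $\partial F(x, w_F, h^*)/\partial w_F$ to obtain the gradient produced by $K+1$ steps of the original RBP and compare it with the $K$-step Neumann-RBP gradient from Proposition~\ref{prop:neumann_rbp_2},
\begin{align}
\frac{\partial L}{\partial w_F} = \frac{\partial L}{\partial y} \sum_{k=0}^{K} \frac{\partial y}{\partial h^*} J_{F,h^*}^{k} \frac{\partial F(x, w_F, h^*)}{\partial w_F}.
\end{align}
Transposing the partial sum representation of $z_{K+1}$ and contracting it with $\partial F/\partial w_F$ on the right yields exactly the same expression summand-by-summand, which delivers the claim.

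The main obstacle, such as it is, is purely bookkeeping. The iteration counter for RBP is offset by one relative to the Neumann truncation index because the zero initialisation \emph{wastes} the first step before any non-trivial contribution enters $z$; this is why $K+1$ RBP steps correspond to a Neumann truncation at $K$, not at $K+1$. One must also be careful to place the transposes so that the sum obtained from the RBP iteration, which naturally involves $(J_{F,h^*}^\top)^k$ acting on $v$ from the left, matches the right-to-left Jacobian chain in the Neumann-RBP formula. Once these conventions are aligned, no further analysis is required.
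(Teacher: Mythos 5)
Your proposal is correct and follows essentially the same route as the paper: unroll the RBP recursion with $z_0=0$ to get $z_{K+1}=\sum_{k=0}^{K}\left(J_{F,h^{\ast}}^{\top}\right)^{k}\left(\frac{\partial L}{\partial y}\frac{\partial y}{\partial h^{\ast}}\right)^{\top}$, and identify this with the $K$-step truncated Neumann sum that Neumann-RBP computes (the paper does this by also unrolling the $v_t,g_t$ recursion to exhibit $g_K$ explicitly, whereas you invoke the closed-form $K$-step expression, which amounts to the same identity). The induction, the $t=K+1$ specialization, and the off-by-one accounting between the zero-initialized RBP iterate and the seed-initialized Neumann accumulator are all handled correctly.
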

\begin{proof}
To prove this proposition, we only need to compare the vector $z_{K+1}$ and $g_{K}$ returned by two algorithms respectively.
For original RBP, recall in Algorithm $1$, we have the following recursion,
\begin{align}
z_{t} = J_{F, h^{\ast}}^{\top} z_{t-1}  + \left( \frac{\partial L}{\partial y} \frac{\partial y}{\partial h^{\ast}} \right)^{\top}.
\end{align}
Therefore, after $K+1$ step, we have,
\begin{align}
z_{K+1} = \left( J_{F, h^{\ast}}^{\top} \right)^{K+1} z_{0} + \sum_{t=0}^{K} \left( J_{F, h^{\ast}}^{\top} \right)^{t} \left( \frac{\partial L}{\partial y} \frac{\partial y}{\partial h^{\ast}} \right)^{\top}.
\end{align}
For Neumann-RBP, we have the following recursion from Algorithm $2$,
\begin{align}
v_{t} &= J^{\top} v_{t-1} \nonumber \\
g_{t} &= g_{t-1} + v_{t} 
\end{align}
with $v_{0} = g_{0} = \left( {\frac{\partial L}{\partial y}} {\frac{\partial y}{\partial h^{\ast}}} \right)^{\top}$.
Therefore, after $K$ step, we have the following expansion,
\begin{align}
g_{K} = \sum_{t=0}^{K} \left( J_{F, h^{\ast}}^{\top} \right)^{t} \left( \frac{\partial L}{\partial y} \frac{\partial y}{\partial h^{\ast}} \right)^{\top}.
\end{align}
The relationship is now obvious.
\end{proof}

\section{Experiments}

\subsection{Example Code}
Our Neumann-RBP is very simple to implement as long as the auto-differentiation function is provided.
Here we show an example code based on PyTorch in Listing 1.
The effective number of lines is less than $10$.
{
\renewcommand*\familydefault{\ttdefault} 
\begin{lstlisting}[language=Python, float=*b, caption=PyTorch example code]
def neumann_rbp(weight, hidden_state, loss, rbp_step)
  # get the gradient of last hidden state
  grad_h = autograd.grad(loss, hidden_state[-1], retain_graph=True)

  # set v, g to grad_h
  neumann_v = grad_h.clone()
  neumann_g = grad_h.clone()

  for i in range(rbp_step):
    # set last hidden_state's gradient to neumann_v[prev]
    # and get the gradient of last second hidden state
    neumann_v = autograd.grad(
                           hidden_state[-1], hidden_state[-2],
                           grad_outputs=neumann_v,
                           retain_graph=True)
                           
    neumann_g += neumann_v

  # set last hidden_state's gradient to neumann_g
  # and return the gradient of weight
  return autograd.grad(hidden_state[-1], weight, grad_outputs=neumann_g)
\end{lstlisting}
}

\subsection{Continuous Hopfield Network}

\begin{table}[t]
\centering
\resizebox{\linewidth}{!}{%
\begin{tabular}{@{}c|cccc@{}}
\hline
\toprule
Truncate Step & TBPTT & RBP & CG-RBP & Neumann-RBP \\
\midrule
\midrule
10 & 100\% & 1\% & 100\% & 100\% \\
20 & 100\% & 4\% & 100\% & 100\% \\
30 & 100\% & 99\% & 100\% & 100\% \\
\bottomrule
\end{tabular}
}
\caption{Success rate of different methods with different truncation steps. RBP is unstable until the truncation step reaches $30$.}
\label{table:hopfield}
\end{table}

The success rates out of $100$ experiments with different random corruptions and initialization are counted in Table~\ref{table:hopfield}. 
We consider one trial as successful if its final training loss is less than $50\%$ of the initial loss.
From the table, we can see that original RBP almost always fails to converge until the truncation step increases to $30$ whereas both CG-RBP and Neumann-RBP have no issues to converge.

Figure~\ref{fig:hopfield_vis} shows full results of visualization of associative memory.
\begin{figure*}
\centering
\renewcommand*{\arraystretch}{0.1}
\begin{tabular}
	{@{\hspace{0.1mm}}c
	 @{\hspace{0.1mm}}c
	 @{\hspace{0.1mm}}c
	 @{\hspace{0.1mm}}c
	 @{\hspace{0.1mm}}c
	 @{\hspace{0.1mm}}c
	 @{\hspace{0.1mm}}c
	 @{\hspace{0.1mm}}c
	 @{\hspace{0.1mm}}c
	 @{\hspace{0.1mm}}c
	 @{\hspace{0.1mm}}c
	 @{\hspace{0.1mm}}c}
    \includegraphics[width=0.08\linewidth]{imgs/hopfield/image_corrupt_0.png}&
    \includegraphics[width=0.08\linewidth]{imgs/hopfield/bptt_recover_0.png}&
    \includegraphics[width=0.08\linewidth]{imgs/hopfield/tbptt_recover_0.png}&
    \includegraphics[width=0.08\linewidth]{imgs/hopfield/rbp_recover_0.png}&
    \includegraphics[width=0.08\linewidth]{imgs/hopfield/cg_rbp_recover_0.png} &
    \includegraphics[width=0.08\linewidth]{imgs/hopfield/neumann_rbp_recover_0.png}  &
    \includegraphics[width=0.08\linewidth]{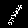}&
    \includegraphics[width=0.08\linewidth]{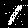}&
    \includegraphics[width=0.08\linewidth]{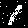}&
    \includegraphics[width=0.08\linewidth]{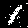}&
    \includegraphics[width=0.08\linewidth]{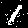} &
    \includegraphics[width=0.08\linewidth]{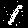}\\
    \includegraphics[width=0.08\linewidth]{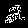}&
    \includegraphics[width=0.08\linewidth]{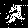}&
    \includegraphics[width=0.08\linewidth]{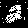}&
    \includegraphics[width=0.08\linewidth]{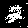}&
    \includegraphics[width=0.08\linewidth]{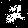} &
    \includegraphics[width=0.08\linewidth]{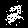} &
    \includegraphics[width=0.08\linewidth]{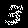}&
    \includegraphics[width=0.08\linewidth]{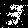}&
    \includegraphics[width=0.08\linewidth]{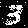}&
    \includegraphics[width=0.08\linewidth]{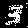}&
    \includegraphics[width=0.08\linewidth]{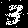} &
    \includegraphics[width=0.08\linewidth]{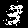}\\
    \includegraphics[width=0.08\linewidth]{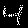}&
    \includegraphics[width=0.08\linewidth]{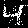}&
    \includegraphics[width=0.08\linewidth]{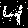}&
    \includegraphics[width=0.08\linewidth]{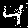}&
    \includegraphics[width=0.08\linewidth]{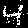} &
    \includegraphics[width=0.08\linewidth]{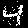} &
    \includegraphics[width=0.08\linewidth]{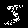}&
    \includegraphics[width=0.08\linewidth]{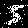}&
    \includegraphics[width=0.08\linewidth]{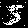}&
    \includegraphics[width=0.08\linewidth]{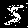}&
    \includegraphics[width=0.08\linewidth]{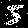} &
    \includegraphics[width=0.08\linewidth]{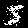}\\
    \includegraphics[width=0.08\linewidth]{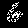}&
    \includegraphics[width=0.08\linewidth]{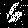}&
    \includegraphics[width=0.08\linewidth]{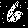}&
    \includegraphics[width=0.08\linewidth]{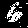}&
    \includegraphics[width=0.08\linewidth]{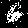} &
    \includegraphics[width=0.08\linewidth]{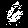} &                 
    \includegraphics[width=0.08\linewidth]{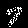}&
    \includegraphics[width=0.08\linewidth]{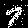}&
    \includegraphics[width=0.08\linewidth]{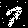}&
    \includegraphics[width=0.08\linewidth]{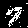}&
    \includegraphics[width=0.08\linewidth]{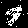} &
    \includegraphics[width=0.08\linewidth]{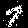}\\
    \includegraphics[width=0.08\linewidth]{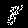}&
    \includegraphics[width=0.08\linewidth]{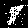}&
    \includegraphics[width=0.08\linewidth]{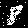}&
    \includegraphics[width=0.08\linewidth]{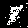}&
    \includegraphics[width=0.08\linewidth]{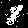} &
    \includegraphics[width=0.08\linewidth]{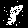} &
    \includegraphics[width=0.08\linewidth]{imgs/hopfield/image_corrupt_9.png}&
    \includegraphics[width=0.08\linewidth]{imgs/hopfield/bptt_recover_9.png}&
    \includegraphics[width=0.08\linewidth]{imgs/hopfield/tbptt_recover_9.png}&
    \includegraphics[width=0.08\linewidth]{imgs/hopfield/rbp_recover_9.png}&
    \includegraphics[width=0.08\linewidth]{imgs/hopfield/cg_rbp_recover_9.png} &
    \includegraphics[width=0.08\linewidth]{imgs/hopfield/neumann_rbp_recover_9.png}\\
    (a) & (b) & (c) & (d) & (e) & (f) & (a) & (b) & (c) & (d) & (e) & (f) \\
\end{tabular}
\caption{Visualization of associative memory. (a) Corrupted input image; (b)-(f) are retrieved images by BPTT, TBPTT, RBP, CG-RBP, Neumann-RBP respectively.} 
\label{fig:hopfield_vis}
\end{figure*}

\subsection{Citation Networks}

Table~\ref{exp:datasets} shows the statistics of datasets we used in our experiments.

\begin{table}[!htbp]
\centering
\resizebox{\linewidth}{!}{%
\begin{tabular}{@{}l|rrrrr@{}}
\hline
\toprule
Dataset & \#Nodes & \#Edges & \#Classes & \#Features \\
\midrule
Cora & 2,708 & 5,429 & 7 & 1,433 \\
Pubmed & 19,717 & 44,338 & 3 & 500 \\
\bottomrule
\end{tabular}
}
\caption{Dataset statistics of citation networks.}
\label{exp:datasets}
\end{table}

We also inlcude the comparsion with the recent work ARTBP~\cite{tallec2017unbiasing}. 
The experiment setting is exactly the same as described in the paper. 
Since the underlying RNN has the loss defined at the last time step, i.e., 100th step, we adapt the ARTBP as follows: instead of randomly truncating at multiple locations, we randomly choose one time step to truncate. 
Similar analysis can be derived to compensate the truncated gradient such that it is unbiased. 
Due to the limited time, we only tried uniform and truncated Poisson distribution (expected truncation point is roughly at the 95th time step which is where TBPTT stops) over the truncation location. 
We use SGD with momentum as the optimizer for all methods. 
The average validation accuracy over 10 runs are in the table below. 
We can see that both ARTBP variants do not perform as well as Neumann-RBP in this setting. 
ARTBP with truncated Poisson is better than the one with uniform which matches the other observation that TBPTT is better than full BPTT. 

\begin{table}[!htbp]
\centering
\begin{tabular}{@{}c|cc@{}}
\hline
\toprule
Test Acc. & Cora \\
\midrule
\midrule
Baseline & 39.96 $\pm$ 3.4 \\
BPTT & 24.48 $\pm$ 6.6 \\
TBPTT & 46.55 $\pm$ 6.4 \\
Uniform-ARTBP & 27.88 $\pm$ 3.2 \\
TPoisson-ARTBP & 42.22 $\pm$ 7.1 \\
RBP & 29.25 $\pm$ 3.3 \\
CG-RBP & 39.26 $\pm$ 6.5 \\
Neumann-RBP & \textbf{46.63} $\pm$ 8.3 \\
\bottomrule
\end{tabular}
\caption{Test accuracy of different methods on citation networks.} 
\label{table:citation}
\end{table}


















\bibliography{rbp}
\bibliographystyle{icml2018}